\documentclass[11pt,a4paper]{article}
\usepackage[margin=27mm]{geometry}
\usepackage[T1]{fontenc}
\usepackage{lmodern}
\usepackage[round]{natbib}
\usepackage{amsmath,amssymb,amsthm,booktabs,graphicx,microtype}
\usepackage{tikz,pgfplots}
\usepackage[section]{placeins}
\usepackage{flafter}
\usepackage[font=small,labelfont=bf]{caption}
\usepgfplotslibrary{groupplots,fillbetween}
\pgfplotsset{compat=1.18}
\usepackage[colorlinks=true,linkcolor=blue!55!black,citecolor=blue!55!black,urlcolor=blue!55!black]{hyperref}
\newtheorem{theorem}{Theorem}
\newtheorem{proposition}{Proposition}
\newcommand{\E}{\mathbb E}

\hypersetup{pdftitle={How Many Labels Does Model Choice Need? Certificates and Budgets for Selective Prediction},pdfauthor={Tetsuji Kuboyama}}
\title{\textbf{How Many Labels Does Model Choice Need?}\\[2mm]
\large\textbf{Certificates and Budgets for Selective Prediction}}
\author{Tetsuji Kuboyama\\Computer Centre, Gakushuin University\\
\texttt{ori-cs.arxiv@tk.cc.gakushuin.ac.jp}}
\date{September 16, 2026}
\begin{document}
\maketitle
\begin{abstract}
Classifiers can make identical predictions yet require labels to compare
their selective performance: confidence ranks weight the same errors differently.
We quantify this requirement for the area under the generalized
risk--coverage curve (AUGRC). A prelabel lower bound rules out insufficient
budgets. With all labels known, a covering linear program bounds the
minimum number of labels sufficient to fix the winner (the certificate size)
within $K-1$ labels for $K$ candidates.
For fixed $K$, independent uniform orders and identical predictions, the
prelabel bound approaches one quarter of the pool. With iid Bernoulli errors
independent of the orders, every exact acquisition policy reads almost all
labels asymptotically, although a two-candidate certificate needs only half. Across 108 feature-panel comparisons on
nine datasets, disagreement labels settle every accuracy choice but no
AUGRC choice. A 20\% budget is ruled out in 96 conditions; certificates need
$56$--$57\%$ on average. On ten conditions with pretrained image classifiers, confidence-score
choice reads 68--91\% of 10,000 labels for exact selection and 50--67\%
with AUGRC tolerance $5\times10^{-4}$.
An exact stopping test works with any acquisition order. Together, these
results link confidence ranks to label budgets and certified model comparison.
\end{abstract}

\section{Introduction}
How many labels are needed to choose between trained models? For expensive
labels, this determines whether a comparison fits the budget. We fix
predictions and confidence ranks on a set of evaluation examples (the pool)
and seek the winner that all pool labels would give, under a fixed tie rule.

For accuracy, a row on which every candidate predicts the same label cannot
change their ordering. For selective prediction, where a classifier accepts
its most confident predictions first, that same row can decide the winner.
AUGRC integrates accepted-error mass over coverage
\citep{traub2024overcoming}; the earlier an error is accepted, the more it
counts. Figure~\ref{fig:overview} shows this difference using two rows.

\begin{figure}[!htbp]
\centering\includegraphics[width=\linewidth,alt={Two candidates predict zero on both rows but accept them in opposite orders. Their AUGRC values are (3y1+y2)/8 and (y1+3y2)/8. Reading y1=0 selects A for either value of y2, with ties favoring A.}]{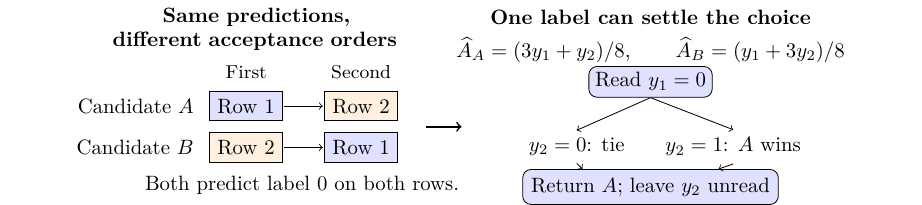}
\caption{\textbf{Same predictions do not settle a selective comparison.}
Both candidates predict 0 but accept rows in opposite orders. Before reading
labels $y_i$, either can win: $y=(1,0)$ favors $B$, and $y=(0,1)$ favors $A$.
After reading $y_1=0$, both completions select $A$ under the fixed rule that
ties favor $A$. Accuracy always ties; $\widehat A$ denotes AUGRC
(lower is better), with weights defined in Section~\ref{sec:metric}.}
\label{fig:overview}
\end{figure}

Confidence ranks can impose substantial label requirements even when
predictions agree. Here, information is measured by the number of
evaluation labels revealed.
A certificate is a label set that fixes the winner regardless of unread labels.
We give a lower bound before labeling, detect a certificate during acquisition,
and bound its minimum size within $K-1$ labels afterwards. This separates the
information a comparison requires from the cost of finding it.

\FloatBarrier
Our contributions are:
\begin{enumerate}\setlength{\itemsep}{1pt}
\item \textbf{Ranks require information.} With identical predictions and
independent uniform confidence orders, the prelabel bound tends to one quarter of the pool; with
iid errors independent of ranks, certificates need at least half, but
every exact policy reads almost all labels (Propositions~\ref{prop:random}
and~\ref{prop:acquisition}).
\item \textbf{The requirement is computable.} We give an exact stopping
test valid after adaptive queries (Theorem~\ref{thm:pool}), a covering LP
bounding the minimum certificate within $K-1$ labels
(Theorem~\ref{thm:certsize}), and a prelabel lower bound
(Theorem~\ref{thm:rankbound}).
\item \textbf{The bounds guide evaluation.} Nine datasets show when labels
on agreement rows are necessary, which budgets are insufficient, and how
requirements change with tolerance and candidate sets. Pretrained multiclass
classifiers demonstrate certified confidence-score choice.
\end{enumerate}

The paper follows the evaluation workflow: Section~\ref{sec:metric}
fixes the objective, Section~\ref{sec:fitted} gives the stopping test,
Section~\ref{sec:cert} derives label requirements and their proofs, and
Section~\ref{sec:use} turns the bounds into budget decisions.
Section~\ref{sec:experiments} combines the experimental results with their
protocols and follow-up analyses.

\section{Setting}
\label{sec:metric}
Let $X$ be an input and $Y\in\{0,1\}$ its label. Let $f$ be a classifier
and $g$ a confidence score, larger for earlier acceptance.
With loss $\ell=\mathbf1\{f(X)\ne Y\}$, let
$a_c(X)\in\{0,1\}$ indicate acceptance of the most confident fraction $c$
of the population, randomizing independently within score ties. Generalized risk and its area are
\begin{equation}
 G(c)=\E[\ell a_c(X)],\qquad A=\int_0^1G(c)\,dc.
 \label{eq:area}
\end{equation}
AUGRC integrates accepted-error mass, rather than conditional error among
accepted examples \citep{traub2024overcoming}. For binary loss, $0\le A\le1/2$.

\paragraph{Finite-pool convention.}
For $n$ fixed evaluation rows with binary losses $\ell_i$, we linearly interpolate accepted-error mass
between consecutive coverage points and average over all orders within a
confidence tie. A tied block occupying zero-based positions $[s,e)$ gives each
row the weight
\begin{equation}
 \omega_i=\frac{2n-s-e}{2n^2},\qquad
 \widehat A=\sum_i\omega_i\ell_i.\label{eq:weights}
\end{equation}
The weights sum to $1/2$. With $\bar\ell=n^{-1}\sum_i\ell_i$, the
right-endpoint sum, $\widehat A+\bar\ell/(2n)$,
can change a comparison; all results use \eqref{eq:weights}.

\paragraph{Tie weights.}
At zero-based rank $r$, an error contributes $(n-r-1/2)/n^2$ to the
trapezoidal integral. Averaging $r$ over $[s,e)$ gives \eqref{eq:weights}.
The right-endpoint sum adds $1/(2n^2)$ per error.

\section{Certifying a choice from partial labels}
\label{sec:fitted}
Freeze $K$ classifier--confidence pairs before starting label acquisition
on an evaluation pool of $n$ rows. Features,
predicted labels $\hat y_{ji}\in\{0,1\}$ and confidence ranks are known for every row;
labels are not. Querying row $i$ reveals its label $y_i$ to all candidates. The
target is the candidate with the smallest AUGRC on the full pool, with the
smaller index breaking ties: the choice one would make with all pool labels.

Scale the empirical AUGRC as $r_j(y)=2n^2\widehat A_j(y)$.
With integer weights $w_{ji}=2n^2\omega_{ji}$, this gives
\begin{align}
 r_j(y)&=c_j+\sum_i a_{ji}y_i,\label{eq:linear}\\
 c_j&=\sum_iw_{ji}\hat y_{ji},\quad a_{ji}=w_{ji}(1-2\hat y_{ji}).\notag
\end{align}

The risk is linear in the labels because each row's loss is linear in $y_i$.
For the sets of queried rows $O$ and unread rows $U$, put $b_{jki}=a_{ji}-a_{ki}$.
The exact lower and upper bounds on $r_j-r_k$ over all assignments to the unread labels are
\begin{align}
 L_{jk}&=c_j-c_k+\sum_{i\in O}b_{jki}y_i+
                  \sum_{i\in U}\min(0,b_{jki}),\label{eq:lower}\\
 H_{jk}&=c_j-c_k+\sum_{i\in O}b_{jki}y_i+
                  \sum_{i\in U}\max(0,b_{jki}).\notag
\end{align}
Each endpoint is attained by setting every unread label according to the sign
of its coefficient.

\begin{theorem}[Exact finite-pool stopping]
\label{thm:pool}
With fixed predictions and confidence weights and no constraints on the unread
binary labels, candidate $k$ is the winner under the fixed tie rule for every label
completion if and only if, for every $j\ne k$,
\begin{equation}
 L_{jk}\ge0\quad(j>k),\qquad L_{jk}>0\quad(j<k).\label{eq:stop}
\end{equation}
The condition is valid after arbitrary adaptive queries. If it fails for every
$k$, the current labels do not determine the winner.
\end{theorem}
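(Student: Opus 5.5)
The argument rests on the exactness of the bounds in \eqref{eq:lower}. First I will make precise that, for fixed observed labels $y_O$, the map $y_U\mapsto r_j(y)-r_k(y)$ is affine with coefficient $b_{jki}$ on each unread row. This follows from \eqref{eq:linear}, since $r_j-r_k=c_j-c_k+\sum_i b_{jki}y_i$. Over the cube $\{0,1\}^U$ the minimum of an affine function is attained coordinatewise: put $y_i=1$ when $b_{jki}<0$ and $y_i=0$ otherwise. So $L_{jk}=\min_{y_U}(r_j-r_k)$ exactly, and the minimum is attained by an explicit completion $y^{(jk)}$.

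Next I translate the event ``$k$ wins on completion $y$''. Under the tie rule, $k$ is the winner if and only if, for every $j\ne k$, either $r_j(y)>r_k(y)$, or $r_j(y)=r_k(y)$ with $j>k$. Equivalently, $r_j-r_k\ge 0$ for $j>k$ and $r_j-r_k>0$ for $j<k$. Scaling by $2n^2$ preserves order, so comparing $r_j$ is the same as comparing $\widehat A_j$.

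With this, both directions are short.

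\emph{Sufficiency.} If \eqref{eq:stop} holds, then for every completion $y$ and every $j$ we have $r_j(y)-r_k(y)\ge L_{jk}$. This lower bound meets the required (strict or non-strict) inequality, so $k$ wins for every completion.

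\emph{Necessity.} If \eqref{eq:stop} fails for some $j$, take the completion $y^{(jk)}$. On it, $r_j-r_k=L_{jk}$, which is negative for $j>k$ and nonpositive for $j<k$. In either case $j$ beats or ties-and-outranks $k$, so $k$ is not the winner on that completion.

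The main subtlety is only this: the minimizing completion depends on the pair $(j,k)$. Necessity needs just one bad completion per failing pair, so it is never necessary to minimize all differences simultaneously.

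For adaptivity, the key point is that the theorem is a statement about the actual realized set $O$ and the actual values $y_O$. The conclusion quantifies over all completions of the unread labels, with no probabilistic model. The way $O$ was chosen, including dependence on earlier revealed labels, never enters the argument, so the same deterministic equivalence holds after any adaptive sequence of queries; no union bound or stopping-time argument is involved.

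Finally, if the condition fails for every $k$, then by necessity each candidate loses on some completion consistent with the observed labels. Some completion exists and has a winner, since the tie rule makes the winner unique. Hence at least two distinct candidates are winners on different completions, or more simply, no single candidate wins on all of them, and the current labels do not determine the winner.
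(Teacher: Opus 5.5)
Your proposal is correct and follows the paper's own argument. Sufficiency holds because $L_{jk}$ lower-bounds $r_j-r_k$ on every completion, necessity uses the sign-attaining completion for a single failing pair, and adaptivity holds because the equivalence is pointwise in the realized observation history; you simply spell out what the paper leaves implicit, namely the exact coordinatewise minimum, the tie-rule translation, and why failure for every $k$ produces two completions with different winners.
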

\begin{proof}
The inequalities say that $k$ beats, or wins the index tie against, every
competitor under every completion. If one inequality fails, the completion
attaining its lower endpoint makes that competitor defeat $k$ or win the tie,
so $k$ is not a universal winner. The argument is pointwise in the observation
history, so adaptive querying does not change it.
\end{proof}
Reading row $i$ raises $L_{jk}$ by $b_{jki}y_i-\min(0,b_{jki})$, which is
either $0$ or $|b_{jki}|$. For $\tau>0$, requiring $L_{jk}\ge-\tau\cdot2n^2$
for every $j\ne k$ certifies that $k$ is within $\tau$ AUGRC of the minimizer
under every completion. Section~\ref{sec:tolerance} uses this relaxation;
$\tau=0$ retains the exact tie rule in \eqref{eq:stop}.

\begin{figure}[!htbp]
\centering\small
\setlength{\tabcolsep}{4.5pt}
\begin{tabular}{crrcccrcc}\toprule
Row & $w_{Ai}$ & $w_{Bi}$ & $\hat y_{Ai}$ & $\hat y_{Bi}$ & $b_i$ & $y_i$ & $g_i$ & Read\\\midrule
1 & 7 & 5 & 0 & 0 & $+2$ & 1 & 2 & 3rd\\
2 & 5 & 7 & 0 & 0 & $-2$ & 0 & 2 & 4th\\
3 & 3 & 1 & 1 & 0 & $-4$ & 0 & 4 & 1st\\
4 & 1 & 3 & 0 & 1 & $+4$ & 0 & 0 & 2nd\\
\bottomrule\end{tabular}\par\smallskip
$[L_{AB},H_{AB}]$: $[-6,6]\to[-2,6]\to[-2,2]\to[0,2]\to[2,2]$
\caption{\textbf{The test at work on four rows.} Confidence orders are
$1,2,3,4$ ($A$) and $2,1,4,3$ ($B$), giving
$r_A-r_B=2y_1-2y_2-4y_3+4y_4$. Write $b_i=b_{ABi}$ and $g_i=b_i y_i-\min(0,b_i)$, the increase in
$L_{AB}$ on reading row $i$. Read in decreasing $|b_i|$, breaking ties by row index;
$y_i$ and $g_i$ are shown with hindsight. Since ties favor $A$, the test
certifies $B$ only when $L_{AB}>0$, after the fourth label.}
\label{fig:worked}
\end{figure}

\paragraph{A worked example.}
Figure~\ref{fig:worked} runs the test on four rows. Row 3, read first, raises
the lower bound of $r_A-r_B$ from $-6$ to $-2$; row 4 only lowers the upper
bound. The two agreement rows then lift the lower bound to $2>0$ and certify
$B$. The certificate LP of Section~\ref{sec:cert} shows afterwards that three
labels sufficed and that row 4 was unnecessary for this labeling.

\paragraph{Why sharing the labels matters.}
An interval for each risk separately, with the unread labels chosen
independently for each candidate, is a valid but weaker test. For example, if both risks contain the same unread term $3y_i$,
that term cancels in their difference. Subtracting its separate intervals
$[0,3]$ instead gives $[-3,3]$, combining incompatible values of the same label.
Let $L^{\rm sep}_{jk}=\min_y r_j(y)-\max_y r_k(y)$ with the observed labels fixed.
\begin{proposition}[Gap from separate intervals]
\label{prop:gap}
For the risks in \eqref{eq:linear},
\begin{equation}
 L_{jk}-L^{\rm sep}_{jk}
 =\sum_{i\in U:\hat y_{ji}=\hat y_{ki}}\min(w_{ji},w_{ki}).\label{eq:gap}
\end{equation}
Hence the shared test never stops later than the separate test along the
same query order.
\end{proposition}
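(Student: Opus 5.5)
Both quantities share the observed part $c_j-c_k+\sum_{i\in O}(a_{ji}-a_{ki})y_i$, so the plan is to reduce the identity to a row-by-row comparison over the unread set $U$. For $L^{\rm sep}_{jk}$, minimizing $r_j$ and maximizing $r_k$ are done independently, and both are separable across unread rows. So
\[
L^{\rm sep}_{jk}=c_j-c_k+\sum_{i\in O}b_{jki}y_i+\sum_{i\in U}\bigl(\min(0,a_{ji})-\max(0,a_{ki})\bigr),
\]
whereas by \eqref{eq:lower} $L_{jk}$ has the term $\min(0,a_{ji}-a_{ki})$ on each unread row. The gap is therefore $\sum_{i\in U}\delta_i$, with
\[
\delta_i=\min(0,a_{ji}-a_{ki})-\min(0,a_{ji})+\max(0,a_{ki}).
\]
Equivalently, $\delta_i$ is the minimum over one shared label $y_i\in\{0,1\}$ minus the minimum over a pair of independent labels. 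It is therefore nonnegative.

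Next I will evaluate $\delta_i$ using $a_{ji}=w_{ji}(1-2\hat y_{ji})$ with $w_{ji},w_{ki}\ge0$ (indeed positive, since $2n-s-e\ge1$). There are four prediction patterns.

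\begin{itemize}
\item If $\hat y_{ji}=\hat y_{ki}=0$, then $a_{ji}=w_{ji}$ and $a_{ki}=w_{ki}$. Hence $\delta_i=\min(0,w_{ji}-w_{ki})+w_{ki}=\min(w_{ji},w_{ki})$.
\item If both predictions equal $1$, then $a_{ji}=-w_{ji}$ and $a_{ki}=-w_{ki}$. Hence $\delta_i=\min(0,w_{ki}-w_{ji})+w_{ji}=\min(w_{ji},w_{ki})$.
\item If $\hat y_{ji}=0$ and $\hat y_{ki}=1$, the coefficients have opposite signs. Then $\min(0,w_{ji}+w_{ki})=0$, $\min(0,w_{ji})=0$ and $\max(0,-w_{ki})=0$, so $\delta_i=0$.
\item If $\hat y_{ji}=1$ and $\hat y_{ki}=0$, then $\delta_i=-w_{ji}-w_{ki}+w_{ji}+w_{ki}=0$.
\end{itemize}

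Summing $\delta_i$ over $U$ gives \eqref{eq:gap}. The only real obstacle is the sign bookkeeping, which this four-case split handles directly.

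For the consequence, along any fixed query order both tests see the same $O$ and $U$ at each step. Then $L_{jk}\ge L^{\rm sep}_{jk}$ for every pair, because the gap is a sum of nonnegative weights. The separate test stops for $k$ when $L^{\rm sep}_{jk}\ge0$ (or $>0$ for $j<k$) for all $j\ne k$, and at that step the same inequalities hold for $L_{jk}$. So condition \eqref{eq:stop} of Theorem~\ref{thm:pool} holds no later than the separate condition does.
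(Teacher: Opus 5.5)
Your proposal is correct and follows essentially the paper's route: compare the joint and independent minima row by row on the unread set, with the observed rows cancelling. The only difference is bookkeeping. You case on four prediction patterns through the label coefficients $a_{ji}$. The paper works with the losses $\ell_{ji}$, which reduces the computation to agreement versus disagreement: the independent minimum is $-w_{ki}$, it is attained jointly when predictions disagree, and the joint minimum is $\min(0,w_{ji}-w_{ki})$ when they agree. You also spell out the stopping consequence explicitly, which the paper leaves implicit.
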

\begin{proof}
Write $\ell_{ji}=\mathbf1\{\hat y_{ji}\ne y_i\}$. Minimizing
$w_{ji}\ell_{ji}-w_{ki}\ell_{ki}$ with the two losses chosen independently
gives $-w_{ki}$. If predictions disagree, $y_i=\hat y_{ji}$ attains this
jointly. If they agree, their common loss takes both values 0 and 1,
whether the shared prediction is 0 or 1. The joint minimum is therefore
$\min(0,w_{ji}-w_{ki})$, whose gap from $-w_{ki}$ is $\min(w_{ji},w_{ki})$.
Observed rows contribute equally to both bounds.
\end{proof}
Only agreement rows contribute to this gap. The shared bound uses one
common loss per row, multiplied by the difference in rank weights
(Figure~\ref{fig:overview}).
Centering the risks on a common label term recovers part of the gap.
Section~\ref{sec:stopping} measures both.

\paragraph{Common-term removal.}
Subtracting $\sum_i t_i y_i$ from every risk leaves its argmin unchanged.
Using $\min(0,z)=(z-|z|)/2$, the separate comparison's remaining gap is
\begin{equation*}
 L_{jk}-L^{\rm sep,t}_{jk}
 =\frac12\sum_{i\in U}\big(|a_{ji}-t_i|+|a_{ki}-t_i|-|a_{ji}-a_{ki}|\big)\ge0.
\end{equation*}
Each term is zero exactly when $t_i$ lies between the two coefficients.
Midpoint centering therefore recovers the joint bound for two candidates;
with more candidates one center need not lie between every pair, whereas
pair-specific centering always does.

\section{How confidence ranks require labels}
\label{sec:cert}
\FloatBarrier
\subsection{Minimum certificates and their computation}
A set of observed rows and their labels is a certificate if it fixes the
winner under every completion. Let $C(y)$ be its minimum size for complete label vector $y$
and winner $k$, using \eqref{eq:stop}. This is the input-specific certificate complexity
\citep{buhrman2002complexity} of our decision, and every valid
acquisition path on $y$ reads at least $C(y)$ labels. For each $j\ne k$ define
\begin{align}
 D_j&=[\mathbf1\{j<k\}-L_{jk}(\varnothing)]_+,\nonumber\\
 g_{ji}&=b_{jki}y_i-\min(0,b_{jki})\ge0,\label{eq:cert-gain}
\end{align}
the deficit before any label and the amount row $i$ removes. Here
$[v]_+=\max(0,v)$ and $\varnothing$ means no observed labels; $D_j$ and
$g_{ji}$ depend on the target $k$, suppressed in their notation.
\begin{theorem}[Minimum certificate]
\label{thm:certsize}
With the integer risks in \eqref{eq:linear} and $x_i=1$ indicating that row $i$ is read,
\begin{equation}
 C(y)=\min_{x\in\{0,1\}^n}\Big\{\sum_i x_i:
                 \sum_i g_{ji}x_i\ge D_j\ \ (j\ne k)\Big\}.
 \label{eq:certsize}
\end{equation}
If $z$ is the value of its $[0,1]^n$ relaxation, rounding a basic optimal
solution upward gives a certificate of size at most $z+K-1$. The benchmark is
therefore computable within $K-1$ labels, whatever the pool size.
\end{theorem}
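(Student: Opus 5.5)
The plan has two parts: first identify certificates with feasible $0/1$ vectors of the covering program, then bound the rounding loss by counting fractional coordinates in a basic solution.

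\emph{Certificates are exactly the feasible $0/1$ points.} Fix the full label vector $y$ and its winner $k$. Take a set $O$ of read rows, carrying their true labels, and let $x$ be its indicator. By \eqref{eq:lower}, reading row $i$ changes the summand for $i$ from $\min(0,b_{jki})$ to $b_{jki}y_i$. Hence
\[L_{jk}(O)=L_{jk}(\varnothing)+\sum_i g_{ji}x_i,\]
and every $g_{ji}\ge0$.

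Theorem~\ref{thm:pool} says $O$ certifies $k$ exactly when $L_{jk}(O)\ge\mathbf1\{j<k\}$ for all $j\ne k$. The strict inequality may be replaced by $\ge1$ because the risks in \eqref{eq:linear} have integer coefficients. That condition is $\sum_i g_{ji}x_i\ge \mathbf1\{j<k\}-L_{jk}(\varnothing)$. Since the left side is nonnegative, it is equivalent to $\sum_i g_{ji}x_i\ge D_j$.

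A certificate cannot fix any winner other than the true one, because the true completion $y$ is among the completions. So $O$ is a certificate precisely when $x$ is feasible, and minimizing $\sum_i x_i$ gives \eqref{eq:certsize}.

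\emph{Rounding bound.} Take the relaxation over $[0,1]^n$. It is feasible, since $x=\mathbf 1$ reads every row and certifies. It is bounded, so a basic optimal solution $x^*$ with value $z$ exists. The constraint system has $n$ variables, the $K-1$ covering rows, and the $2n$ box constraints $0\le x_i\le1$.

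A basic solution has $n$ linearly independent tight constraints. Each coordinate strictly inside $(0,1)$ makes both of its box constraints slack. So if $f$ coordinates are fractional, at most $n-f$ box constraints are tight, and at least $f$ covering constraints must be tight. This gives $f\le K-1$.

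Now set $\bar x_i=\lceil x^*_i\rceil$. Since $g_{ji}\ge0$, rounding up preserves every covering constraint, so $\bar x$ is a certificate. Each fractional coordinate increases by less than $1$, so $\sum_i\bar x_i< z+f\le z+K-1$.

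Finally $z\le C(y)\le\sum_i\bar x_i$. The LP value thus brackets $C(y)$ within $K-1$, independently of $n$, and LPs are solvable in polynomial time.

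\emph{Main obstacle.} The delicate step is the basic-solution counting: the bound must come from a genuine vertex rather than an arbitrary optimum. I would argue it through the rank condition on tight constraints, giving $n\le(n-f)+(K-1)$. One could alternatively cite the standard fact that a vertex of a polytope with $m$ non-box constraints has at most $m$ fractional entries. The integrality step converting strict to nonstrict inequalities also needs care: it relies on the scaled weights $w_{ji}$ being integers, which holds because $2n^2\omega_i=2n-s-e$ by \eqref{eq:weights}.
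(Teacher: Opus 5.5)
Your proof is correct and is the paper's argument with the details written out. The paper's proof likewise uses additivity of the gains $g_{ji}$ to turn the stopping test into the covering constraints, with strict ties handled by integrality of the scaled weights. It then uses the fact that a vertex of the box cut by $K-1$ halfspaces has at most $K-1$ fractional coordinates, which can be rounded up because $g_{ji}\ge0$. One tiny slip that does not affect the bound: $\sum_i\bar x_i<z+f$ should read $\le$, since equality holds when $f=0$.
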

\begin{proof}
Revealing a set of rows adds exactly their $g_{ji}$ to each lower bound, and
\eqref{eq:stop} becomes the covering constraints, strict ties included. A
vertex of the box intersected with $K-1$ halfspaces has at most $K-1$
fractional coordinates. Rounding them up keeps feasibility, since
$g_{ji}\ge0$, and adds at most $K-1$ to the objective.
\end{proof}
In Figure~\ref{fig:worked}, any two labels contribute at most 6 toward the
deficit $D=7$, whereas rows $\{1,2,3\}$ contribute 8. The LP relaxation gives
$z=2.5$, and rounding gives a three-label certificate, so $C(y)=3$.

\paragraph{Covering two rivals with the same labels.}
The two-candidate example above has one deficit to cover. With three
candidates, a useful label for one comparison may do nothing for another.
Figure~\ref{fig:certificate-lp} gives a separate four-row example with
$y=(1,0,0,1)$ and ties resolved in the order $A,B,C$. Its full-label
integer risks are $(r_A,r_B,r_C)=(8,15,13)$, so the winner is $A$.
Before any label, the lower bounds against $A$ are $(-13,-15)$.
Reading rows 2 and 3 supplies gains $(16,12)$: enough against $B$, but
not against $C$. Rows 1 and 2 supply $(16,20)$ and cover both deficits.

\begin{figure}[!htbp]
\centering\includegraphics[width=\linewidth,alt={Three candidates on four rows. For certifying A, the gains are (4,12,4,0) against B and (8,12,0,0) against C. The fractional LP solution (3/8,1,0,0) has value 11/8; rounding selects rows 1 and 2.}]{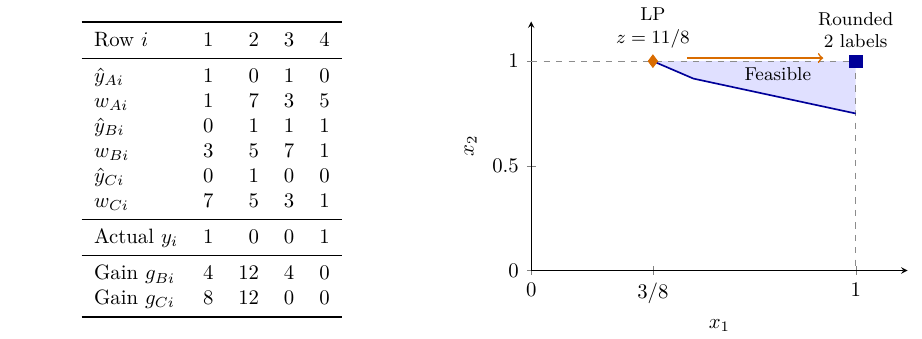}
\caption{\textbf{One label set must cover every rival.}
Left: fixed predictions and rank weights, followed by the actual labels
and their gains for certifying $A$. Each weight vector permutes
$(7,5,3,1)$. Right: the LP feasible region in the slice $x_3=x_4=0$.
Rounding $(3/8,1,0,0)$ upward gives the two-label certificate $\{1,2\}$.
The labels are shown for the hindsight calculation; an acquisition policy
does not know them before querying.}
\label{fig:certificate-lp}
\end{figure}

The LP solution $(3/8,1,0,0)$ has value $11/8$. To see that no fractional
solution is better, divide the constraint for $C$ by 8 and use $x_2\le1$:
\[
 \sum_{i=1}^4x_i\ \ge\ x_1+x_2
 \ \ge\ \frac{15}{8}-\frac{x_2}{2}\ \ge\ \frac{11}{8}.
\]
This bound holds for all four variables, not only the plotted slice.
Thus $\lceil11/8\rceil\le C(y)\le2$, which determines $C(y)=2$.
Static range reads rows in the order $2,3,1,4$ and stops after three
labels. The certificate identifies what was sufficient; the extra query
reflects the cost of finding that set without knowing the labels.

\paragraph{Verifying certificate bounds.}
For $\lambda\ge0$, weak duality gives the LP lower bound
$\lambda^TD+\sum_i\min(0,1-\lambda^Tg_i)$, where $g_i=(g_{ji})_{j\ne k}$.
We evaluate its ceiling using rational arithmetic and check rounded subsets
by integer covering sums.

\paragraph{Two candidates.}
For $K=2$, there is one covering constraint. Sorting the realized gains
$g_{ji}$ and taking the shortest prefix that meets $D_j$ gives an exact
minimum certificate. Replacing a selected row by an unselected row with
larger gain cannot break the constraint. For more candidates, each row
contributes to several constraints, and their joint coverage matters.

\FloatBarrier
\subsection{A bound before labels are read}
\label{sec:prelabel}

Part of the requirement is visible before any label is read, because
$g_{ji}\in\{0,|b_{jki}|\}$ whichever value $y_i$ takes.
For each possible target $k$, let $m_{jk}$ be the smallest $m$ such that the $m$ largest values of
$|b_{jki}|$ sum to at least $D_j$ ($m_{jk}=\infty$ if none does).
Sorting these potential gains gives a label-free bound for each pair.

\begin{theorem}[Prelabel lower bound]
\label{thm:rankbound}
For every label vector $y$ with winner $k^*$,
\begin{equation}
 C(y)\ \ge\ \max_{j\ne k^*}m_{jk^*}\ \ge\ \underline C:=\min_k\max_{j\ne k}m_{jk},
 \label{eq:rankbound}
\end{equation}
and $\underline C$ depends only on predictions and confidence ranks.
\end{theorem}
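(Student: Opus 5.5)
The plan is to reduce everything to the covering program of Theorem~\ref{thm:certsize}. Fix a complete label vector $y$ with winner $k^*$ and take $k=k^*$ in \eqref{eq:cert-gain}. By Theorem~\ref{thm:certsize}, a minimum certificate is a set $S$ of $C(y)$ rows with $\sum_{i\in S}g_{ji}\ge D_j$ for every $j\ne k^*$. Two facts about the data enter. The deficit $D_j=[\mathbf1\{j<k^*\}-L_{jk^*}(\varnothing)]_+$ does not involve any label, since $L_{jk^*}(\varnothing)=c_j-c_{k^*}+\sum_i\min(0,b_{jk^*i})$ is built only from predictions and rank weights. Each realized gain satisfies $0\le g_{ji}\le|b_{jk^*i}|$, because $g_{ji}\in\{0,|b_{jk^*i}|\}$ as noted before the theorem.

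For the first inequality, fix $j\ne k^*$. Then
\[
D_j\le\sum_{i\in S}g_{ji}\le\sum_{i\in S}|b_{jk^*i}|\le\text{(sum of the $|S|$ largest values of $|b_{jk^*i}|$)}.
\]
By the definition of $m_{jk^*}$ as the smallest count whose top values reach $D_j$, this gives $|S|\ge m_{jk^*}$. The definition is monotone: once the top $m$ values reach $D_j$, so do the top $m+1$, since all values are nonnegative. Because $S$ is a certificate, it is finite and meets every constraint. So $m_{jk^*}<\infty$, and if $D_j=0$ then $m_{jk^*}=0$ is consistent. Taking the maximum over $j$ yields $C(y)\ge\max_{j\ne k^*}m_{jk^*}$.

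The second inequality is immediate: $\max_{j\ne k^*}m_{jk^*}$ is one of the terms over which $\underline C$ takes its minimum, namely $k=k^*$. Since $m_{jk}$ uses only $D_j$ and the $|b_{jki}|$ for target $k$, and both are functions of $c_j$, $a_{ji}$ (that is, predictions and tie-averaged rank weights), $\underline C$ is label-free.

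The main point needing care is checking that $D_j$ really is label-free and that the strict-inequality convention for $j<k^*$ is captured by the $\mathbf1\{j<k^*\}$ offset. This relies on the integrality of the scaled risks in \eqref{eq:linear}, so that $L>0$ is equivalent to $L\ge1$. Theorem~\ref{thm:certsize} already encodes this, so I would simply invoke it rather than re-derive it. I do not expect any further obstacle.
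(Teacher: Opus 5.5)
Your proof is correct and follows the same route as the paper's: each realized gain $g_{ji}$ is at most its label-free potential value $|b_{jk^*i}|$, so any certificate for $k^*$ must contain rows whose potential gains reach every label-free deficit $D_j$, and the second inequality is just the term $k=k^*$ in the minimum. Your monotonicity remark is not needed, since $|S|$ is itself an admissible count in the definition of $m_{jk^*}$, but it does no harm.
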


\begin{proof}
A certificate for $k$ must contain rows whose $|b_{jki}|$ sum to at least
$D_j$ for every $j$, which gives both inequalities. All coefficients and
deficits are known before labels. A lower-index duplicate of $k$ gives
$m_{jk}=\infty$, matching the tie rule.
\end{proof}

In Figure~\ref{fig:worked}, the largest potential gains are 4 and 4,
so either target needs at least two rows even before labels are known:
\begin{center}\small
\begin{tabular}{ccc}
Before labels & With all labels & Static order\\
$\underline C=2$ & $C(y)=3$ & 4 labels read
\end{tabular}
\end{center}
A one-label budget is ruled out in advance. The actual labeling needs
three labels, while this order reads one extra.

\paragraph{A worst-case rank example.}
For $n\ge2$, let both candidates predict zero, let $A$ rank $2,\ldots,n,1$
and $B$ rank $1,\ldots,n$, and prefer $A$ at ties. Their integer contrast is
$r_B-r_A=2((n-1)y_1-\sum_{i=2}^n y_i)$. At $y=0$, observing rows $2,\ldots,n$
suffices, and each is necessary: an unread such row can be one while $y_1=0$,
making $B$ win. Hence $C(0)=n-1$, whereas accuracy ties for every completion
and needs zero labels. This shows what confidence ranks alone can require,
not its typical frequency.

\FloatBarrier
\subsection{Independent confidence orders}
\label{sec:random}
\begin{proposition}[Random orders]
\label{prop:random}
Let $K$ candidates predict identically and let their confidence orders be
independent uniformly random permutations without ties. Then
$\underline C/n\to1/4$ in probability for every fixed $K\ge2$. If the
losses are iid Bernoulli$(q)$, $q\in(0,1)$, independent of the orders, then
$\Pr(C(y)<(\tfrac12-\varepsilon)n)\to0$ for every $\varepsilon>0$, and for
$K=2$, $C(y)/n\to1/2$ in probability.
\end{proposition}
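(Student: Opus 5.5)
The plan is to reduce both statements to the empirical distribution of rank differences. Under identical predictions every candidate shares the loss $\ell_i$ on row $i$. With no ties, \eqref{eq:weights} gives $w_{ji}=2n-2\pi_j(i)-1$, where $\pi_j(i)$ is the zero-based rank of row $i$ under candidate $j$. Hence $r_j-r_k=\sum_i d_i\ell_i$ with $d_i=w_{ji}-w_{ki}=2(\pi_k(i)-\pi_j(i))$, and $|b_{jki}|=|d_i|$. Because $\sum_i d_i=0$, we get $L_{jk}(\varnothing)=\sum_i\min(0,d_i)=-\tfrac12 S_{jk}$ with $S_{jk}=\sum_i|d_i|$. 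Therefore $D_j=\tfrac12S_{jk}+O(1)$; the $O(1)$ comes from the tie indicator and is irrelevant at scale $n^2$. So $m_{jk}$ is the number of largest $|d_i|$ needed to reach half of their total.

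\textbf{Prelabel limit.} I would take the rank pairs $(\pi_j(i)/n,\pi_k(i)/n)_{i\le n}$. Their empirical measure converges in probability, in Kolmogorov distance, to the uniform law on $[0,1]^2$. This holds because $\pi_k\circ\pi_j^{-1}$ is a uniform permutation, so the points are $(t,\rho(t)/n)$. I would prove it by a second-moment bound on counts in rectangles and a finite grid, which gives a uniform Glivenko--Cantelli statement. Then $|d_i|/(2n)$ has empirical law converging to that of $T=|U-V|$, which has density $2(1-t)$. The total satisfies $S_{jk}/(2n^2)\to\E T=1/3$. The top-mass function is continuous at the limit, since $T$ has positive density near the threshold. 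Hence $m_{jk}/n\to\Pr(T>t_0)=(1-t_0)^2$, where $t_0$ solves $\int_{t_0}^1 2t(1-t)\,dt=1/6$. That equation is $t_0^2-\tfrac23t_0^3=\tfrac16$, whose solution is $t_0=1/2$, giving $1/4$. With $K$ fixed there are finitely many pairs. A union bound gives simultaneous convergence, so both $\max_j$ and $\min_k$ of $m_{jk}/n$ tend to $1/4$.

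\textbf{Certificate with iid losses.} Write the gain as $g_{ji}=d_i\ell_i-\min(0,d_i)$, which equals $|d_i|$ exactly when $d_i>0,\ell_i=1$ or $d_i<0,\ell_i=0$, and $0$ otherwise. Let $k^*$ be the realized winner and fix any $j\ne k^*$. The total available gain is $\sum_i g_{ji}=\tfrac12S_{jk^*}+(r_j-r_{k^*})$. Conditional on the orders, $r_j-r_{k^*}$ has mean $q\sum_id_i=0$ and standard deviation $O(n^{3/2})=o(n^2)$; a max over the $K-1$ pairs handles the dependence on which candidate wins. So the deficit $D_j\approx\tfrac12S$ equals the available gain up to $o(n^2)$. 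Any certificate must therefore contain all gain rows except a set of total $|d|$-mass $o(n^2)$. For every $\delta>0$, that excluded set contains only $o(n)$ rows with $|d_i|\ge\delta n$, and rows with $|d_i|<\delta n$ number $O(\delta n)$ by the limit law. The number of gain rows is about $n/2$. Indeed, about half the rows have $d_i>0$, and conditional on the signs the gain indicators are independent Bernoulli with probability $q$ or $1-q$. This gives $\tfrac12 q+\tfrac12(1-q)=\tfrac12$ by the LLN. Letting $\delta\to0$ yields $\Pr(C(y)<(\tfrac12-\varepsilon)n)\to0$. For $K=2$, the set of all gain rows is itself feasible, because its total is at least $D$ whenever $k^*$ wins. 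So $C(y)\le(\tfrac12+o(1))n$, and $C(y)/n\to1/2$.

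\textbf{Main obstacle.} I expect the delicate step to be the uniform concentration of the rank-pair empirical measure under the dependence of a random permutation. The threshold arguments also need continuity in that measure. I would handle both via the second-moment bound on grid cells and the fact that $T$ has a bounded, continuous density.
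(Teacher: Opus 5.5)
Your proposal is correct and follows essentially the same route as the paper. The paper also reduces everything to the empirical law of $|d_i|/(2n)$ for a uniform permutation, though it gets this from Hoeffding's combinatorial variance formula rather than from rectangle counts on a grid. It then locates the threshold $t_0=1/2$, and for the certificate it bounds the excluded gain mass by the $O_p(n^{3/2})$ slack, split at a small $|d_i|$ threshold, with the full useful set giving the matching upper bound when $K=2$. One point to tighten: the useful-row set and the slack both depend on the random winner, so $|U|/n\to1/2$ and the bound on $|r_j-r_{k^*}|$ must hold simultaneously over all ordered pairs $(j,k)$, not just $K-1$ of them. The paper handles this for $K=2$ via $U^+$ and its complement, and for larger $K$ via a union bound over pairs.
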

\begin{figure}[!htbp]
\centering\includegraphics[width=\linewidth,alt={Analytic rank geometry for independent uniform confidence orders. The region with the largest rank differences contains one quarter of the rows and supplies half of the total difference mass.}]{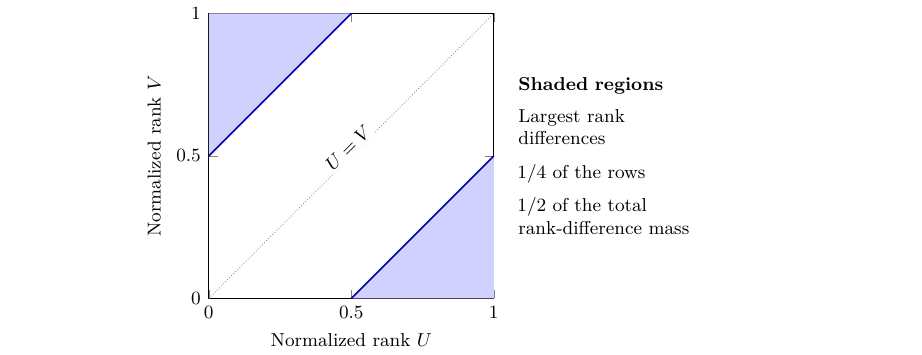}
\caption{\textbf{Where the quarter comes from.} Analytic limiting geometry
for two candidates with identical predictions and independent uniform
orders. Normalized ranks $U,V$ fill this square uniformly. The shaded
quarter carries half of the total absolute rank difference, the limiting
deficit to be covered. These are potential gains; actual gains depend on labels.}
\label{fig:rankgeometry}
\end{figure}

Figure~\ref{fig:rankgeometry} explains the first limit. For independent
uniform normalized ranks $U,V$, the rows with $|U-V|>1/2$ have probability
$1/4$ but carry half of $\E|U-V|=1/3$. Thus even the largest potential
gains need a quarter of the rows to cover the deficit. Once labels are
fixed, each gain is either zero or its potential value. Under the
independent-error model, about half of the rows provide useful gains;
their total is close to the deficit, so a two-candidate certificate
needs almost all of them. The proof below establishes these limits.
Accuracy needs zero labels to settle the tie in the same model.

\begin{proof}[Proof of Proposition~\ref{prop:random}]
\emph{Setup.} With identical predictions and no ties, the zero-based ranks
$\rho_A,\rho_B$ give $d_i:=w_{Ai}-w_{Bi}=2(\rho_B(i)-\rho_A(i))$, and the integer risks
satisfy $r_A-r_B=\sum_id_i\ell_i=:M$, where $\ell_i$ is the loss of row $i$.
Since both rank vectors are permutations of $0,\ldots,n-1$, $\sum_id_i=0$;
with $T=\sum_i|d_i|$, the positive and the negative parts of $d$ each total
$T/2$. Index rows by $i=0,\ldots,n-1$ so that $\rho_A(i)=i$; then $\rho_B=\sigma$ is a uniformly
random permutation and $|d_i|/2n=|\sigma(i)-i|/n$.

\emph{Lemma (permutation statistics).} For bounded Riemann-integrable
$f:[0,1]^2\to\mathbb R$, $\frac1n\sum_if(i/n,\sigma(i)/n)\to\iint f$ in
probability. Indeed the mean is the Riemann sum
$n^{-2}\sum_{i,j}f(i/n,j/n)\to\iint f$, and by Hoeffding's variance formula
for $\sum_ic_{i\sigma(i)}$ with $c_{ij}=f(i/n,j/n)$
\citep{hoeffding1951combinatorial},
\begin{align*}
 \operatorname{Var}\Big[\sum_ic_{i\sigma(i)}\Big]
 &=\frac1{n-1}\sum_{i,j}\big(c_{ij}-\bar c_{i\cdot}-\bar c_{\cdot j}+\bar c_{\cdot\cdot}\big)^2\\
 &\le\frac{16n^2\|f\|_\infty^2}{n-1},
\end{align*}
so the variance of the average is $O(1/n)$ and Chebyshev applies. With
$f=\mathbf1\{|x-y|>s\}$ and $f=|x-y|\mathbf1\{|x-y|>s\}$ this gives, for
every fixed $s\in[0,1]$,
\begin{align*}
 F_n(s)&:=\tfrac1n\#\{i:|d_i|>2ns\}\to(1-s)^2,\\
 G_n(s)&:=\tfrac1{2n^2}\textstyle\sum_{|d_i|>2ns}|d_i|\to H(s):=\int_s^1 2t(1-t)\,dt,
\end{align*}
since $|U-V|$ has density $2(1-t)$ for independent uniforms $U,V$; explicitly
$H(s)=\tfrac13-s^2+\tfrac23s^3$.

\emph{Rank-only bound.} The deficits are $T/2$ for $A$ and $T/2+1$ for $B$,
so $\underline C$ is the number of largest $|d_i|$ whose sum first reaches
$T/2$. Fix $\varepsilon>0$. $H$ is strictly decreasing with $H(0)=1/3$ and
$H(1/2)=1/6$, so $H(\tfrac12+\varepsilon)<\tfrac12H(0)<H(\tfrac12-\varepsilon)$.
By the lemma, with probability tending to one
$G_n(\tfrac12+\varepsilon)<\tfrac12G_n(0)=T/4n^2<G_n(\tfrac12-\varepsilon)$:
the rows with $|d_i|>2n(\tfrac12+\varepsilon)$ do not reach $T/2$ and those
with $|d_i|>2n(\tfrac12-\varepsilon)$ do. Hence
$F_n(\tfrac12+\varepsilon)\le\underline C/n\le F_n(\tfrac12-\varepsilon)+1/n$,
and both ends tend to $(\tfrac12\mp\varepsilon)^2$. As $\varepsilon$ is
arbitrary, $\underline C/n\to1/4$.

\emph{Greedy optimality.} For $K=2$, \eqref{eq:certsize} has one constraint
with nonnegative coefficients and a cardinality objective; replacing any
chosen row by an unchosen row of larger gain keeps feasibility, so the
$m$ largest gains form an optimal certificate for every feasible $m$.

\emph{Certificate.} Let the losses be iid Bernoulli$(q)$, $q\in(0,1)$,
independent of the orders. Before looking at the winner, define the two row
sets $U^+=\{d_i>0,\ell_i=1\}\cup\{d_i<0,\ell_i=0\}$ and $U^-$, its
complement. If $M>0$, $B$ wins; its useful rows are $U=U^+$ with gains
$|d_i|$ and deficit $D=1+\sum_{d_i<0}|d_i|=1+T/2$, and the total gain is
$\sum_{d_i>0,\ell_i=1}d_i+\sum_{d_i<0}|d_i|-\sum_{d_i<0,\ell_i=1}|d_i|=M+T/2$.
If $M\le0$, $A$ wins, $U=U^-$, $D=T/2$ and the total gain is $T/2-M$. In
both cases the total gain exceeds the deficit by the slack
$|M|-\mathbf1\{M>0\}\le|M|$, so by greedy optimality $C(y)=|U|-t$, where
$t$ is the largest number of smallest useful gains whose sum is at most the
slack; in particular $C(y)\le|U|$.
Conditionally on the orders, the indicators $\mathbf1\{i\in U^+\}$ are
independent with success probability $q$ if $d_i>0$ and $1-q$ if $d_i<0$
(this holds for $U^+$, which does not depend on the winner; the winner-dependent
set $U$ inherits the conclusion below because it is either $U^+$ or its
complement), so
$\mathbb E[|U^+|\mid\sigma]=q\,\#\{d_i>0\}+(1-q)\,\#\{d_i<0\}$ and
$\operatorname{Var}(|U^+|\mid\sigma)\le n/4$; the lemma with $f=\mathbf1\{y>x\}$
gives $\#\{d_i>0\}/n\to1/2$ and likewise for $d_i<0$, hence
$|U^+|/n\to1/2$, $|U^-|/n=1-|U^+|/n\to1/2$, and therefore $|U|/n\to1/2$
whichever candidate wins. Also $\mathbb E[M\mid\sigma]=q\sum_id_i=0$ and
$\operatorname{Var}(M\mid\sigma)=q(1-q)\sum_id_i^2\le4q(1-q)n^3$, so
$|M|/n^{3/2}$ is bounded in probability. Fix $x\in(0,1)$ and let
$N_x=\#\{i:|d_i|\le2nx\}$, so $N_x/n\to2x-x^2\le2x$ by the lemma. If
$t>N_x$, the $t$ smallest useful gains include $t-N_x$ gains larger than
$2nx$, whence $(t-N_x)\,2nx<|M|$ and $t<N_x+|M|/(2nx)$. Therefore
\begin{equation*}
 \frac{C(y)}n\;\ge\;\frac{|U|}n-\frac{N_x}n-\frac{|M|}{2n^2x},
\end{equation*}
whose right side tends to $\tfrac12-(2x-x^2)$ in probability, while
$C(y)/n\le|U|/n\to1/2$. Letting $x\downarrow0$ proves
$C(y)/n\to1/2$.

\emph{More candidates.} For $K$ candidates, every pair $(j,k)$ has
$w_j-w_k$ distributed as $d$ above, so $m_{jk}/n\to1/4$ in probability for
each of the finitely many pairs, and $\underline C=\min_k\max_{j\ne k}m_{jk}$
gives $\underline C/n\to1/4$. For the certificate, let $k^*$ be the winner.
Any certificate for $k^*$ satisfies the constraint of each pair $(j,k^*)$ in
\eqref{eq:certsize}, and $k^*$ also wins the two-candidate comparison
against $j$, so $C(y)\ge C_{jk^*}(y)$, the two-candidate certificate size of
that pair. Each $C_{jk^*}(y)/n\to1/2$ by the argument above (the case
$M\le0$ covers the index tie rule in either direction), and a union bound
over the pairs gives $\Pr(C(y)<(\tfrac12-\varepsilon)n)\to0$.

\end{proof}

\FloatBarrier
\subsection{The cost of finding a certificate}
\label{sec:acquisition-limit}

\begin{proposition}[Acquisition versus certificates]
\label{prop:acquisition}
Under the independent-order and independent-error assumptions of
Proposition~\ref{prop:random}, fix the common predictions in advance. Let
$N_\pi$ count distinct labels read by any adaptive policy that uses only these
predictions, ranks, observed losses and independent randomness,
and stops only when it certifies the exact winner for every completion. For fixed $K\ge2$,
$N_\pi/n\to1$ in probability, uniformly over such policies. For $K=2$,
$(N_\pi-C(y))/n\to1/2$ in probability.
\end{proposition}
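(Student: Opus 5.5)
The plan is to exploit the fact that the certificate condition of Theorem~\ref{thm:pool} is a function of the observed labels only, while the unread labels remain fresh coins. If that holds, a stopping policy must leave unread only rows whose total potential gain is of the order of the fluctuation $|M|=O_p(n^{3/2})$. Almost every row has potential gain of order $n^2\cdot$(row share), so almost all rows must be read. I would first prove the claim for $K=2$, using the setup and notation of the proof of Proposition~\ref{prop:random}: $d_i$, $M=r_A-r_B$, $N_x=\#\{i:|d_i|\le 2nx\}$, and the permutation lemma. I would then reduce general $K$ to a single pair.

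\emph{Step 1 (conditional independence at stopping).} Condition on the orders $\sigma$. Let $\mathcal F$ be the history at the stopping time: the queried rows, their losses, and the policy's internal randomness. The policy chooses queries using only the history, and the losses are iid Bernoulli$(q)$ independent of $\sigma$ and of that randomness. A standard optional-stopping/strong-Markov argument then shows that, given $\mathcal F$, the losses on the unread set $R$ are still iid Bernoulli$(q)$. By Theorem~\ref{thm:pool}, the certified winner $k$ and the inequality $L_{jk}\ge0$ (or $>0$) are $\mathcal F$-measurable.

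\emph{Step 2 (unread gain is bounded by the slack).} Take $K=2$ and suppose the policy certifies $k$. By the identity preceding Theorem~\ref{thm:certsize}, the full-label difference satisfies
\[
r_j-r_k=L_{jk}+\sum_{i\in R}g_{ji}\ \ge\ \sum_{i\in R}g_{ji},
\]
and $|r_j-r_k|=|M|$. Hence $S_R:=\sum_{i\in R}g_{ji}\le|M|$. Given $\mathcal F$, each $g_{ji}$ with $i\in R$ equals $|d_i|$ with probability $q$ or $1-q$, and $0$ otherwise, independently across rows. So $\E[S_R\mid\mathcal F]\ge q_0\sum_{i\in R}|d_i|$ with $q_0=\min(q,1-q)>0$, and $\operatorname{Var}(S_R\mid\mathcal F)\le\sum_i d_i^2\le4n^3$. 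By conditional Chebyshev, with probability tending to one,
\[
q_0\sum_{i\in R}|d_i|\le S_R+\lambda n^{3/2}\le |M|+\lambda n^{3/2}.
\]
As in Proposition~\ref{prop:random}, $|M|=O_p(n^{3/2})$ unconditionally. Therefore $\sum_{i\in R}|d_i|=O_p(n^{3/2})$. The constants depend only on $q$, so this bound is uniform over policies.

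\emph{Step 3 (counting).} Fix $x\in(0,1)$. The unread rows with $|d_i|>2nx$ number at most $O_p(n^{3/2})/(2nx)=o_p(n)$. Hence
\[
n-N_\pi=|R|\le N_x+o_p(n),
\]
and $N_x/n\to2x-x^2\le 2x$ by the lemma. Letting $x\downarrow0$ gives $N_\pi/n\to1$. For $K\ge2$, certifying $k$ requires $L_{jk}\ge0$ against every rival, so Steps~1--2 apply to any single fixed rival $j$. The only needed input is that each pairwise contrast $w_j-w_k$ has the law of $d$ and that $|M_{jk}|=O_p(n^{3/2})$. A union bound over the finitely many pairs handles the fact that the winner is random. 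For $K=2$, combining $N_\pi/n\to1$ with $C(y)/n\to1/2$ from Proposition~\ref{prop:random} gives $(N_\pi-C(y))/n\to1/2$.

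The main obstacle is Step~1 together with its use in Step~2. The chosen unread set $R$, the certified candidate $k$, and the event of stopping are all random and adaptive. The Chebyshev bound must therefore be applied conditionally on $\mathcal F$, while $|M|$ is controlled unconditionally. I would handle this by working on the event $\{|M|\le\Lambda n^{3/2}\}$, which has probability close to one. I would also show that the event $\{S_R\le\Lambda n^{3/2}\text{ and }\sum_R|d_i|>\Lambda' n^{3/2}\}$ has conditional probability at most $4/(q_0\Lambda'-\Lambda)^2$ for every history. This makes the bound uniform over policies, including randomized ones.
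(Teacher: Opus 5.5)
Your proof is correct. Its outer structure matches the paper's: the pairwise contrast $d$, counting through $N_x$ and the permutation lemma, a union bound over pairs for $K>2$, and subtracting $C(y)/n\to1/2$. The difference lies in the probabilistic core.

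\textbf{The paper's route.} It never conditions at the stopping time. It extends the policy's queries to all $n$ rows and treats the centered observed sum $S_t=\sum_{s\le t}d_{I_s}(\ell_{I_s}-q)$ as a martingale. It bounds $\max_{t\le n}|S_t|$ at scale $an^2$ by Doob's maximal inequality. It then uses a deterministic necessary condition for stopping: the exact completion interval for $r_A-r_B$ contains both signs unless $|S_t|\ge\min(q,1-q)\sum_{i\in U}|d_i|$.

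\textbf{Your route.} You use the fresh-coin property of the unread set at the stopping time. You compare the realized unread gain $\sum_{i\in R}g_{ji}$ with the full-pool margin $|M|$. You then apply Chebyshev twice: conditionally on the history for the unread gains, and unconditionally for $M$.

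Since $\sum_id_i=0$ gives $M=S_n$, the two routes split the same quantity differently. The paper controls the observed part at every time simultaneously. You control the total $S_n$ and the unread remainder separately.

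\textbf{What each buys.} The paper's version is shorter. It needs no lemma on the conditional law of the unread labels, and it gives an explicit $O(1/n)$ bound on the martingale deviation. Yours reuses the gain--slack picture from the proof of Proposition~\ref{prop:random}. It shows directly why a hindsight certificate can skip about half the rows while a label-blind policy cannot. An unread set chosen without seeing its labels collects, in expectation, at least a $\min(q,1-q)$ share of its potential gain, yet the slack is only $O_p(n^{3/2})$.

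One wording fix: for fixed $\lambda$, your conditional Chebyshev step holds with probability at least $1-4/\lambda^2$, not with probability tending to one. The $\Lambda,\Lambda'$ formulation in your last paragraph states it correctly.
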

Even an optimal policy must find the useful labels without knowing them.
Thus the gap to a hindsight certificate need not be removable by better
query ordering.

Figure~\ref{fig:three-limits} puts the two propositions on one scale.
The prelabel bound uses the largest possible gains; the certificate uses
the gains that actually occur. Acquisition must find useful labels
without seeing the unread losses. These are three different questions,
even for the same predictions and ranks.

\begin{figure}[!htbp]
\centering\includegraphics[width=\linewidth,alt={Three asymptotic label fractions: the prelabel lower bound is one quarter, the minimum two-candidate certificate is one half, and every exact acquisition policy reads a fraction tending to one under the stated independence assumptions.}]{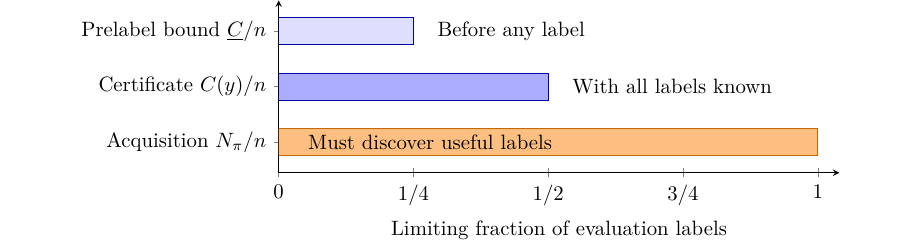}
\caption{\textbf{Enough labels, and the cost of finding them.}
For two candidates with identical predictions, independent uniform
confidence orders and iid Bernoulli$(q)$ losses independent of those
orders, $0<q<1$, the three fractions converge in probability to
$1/4$, $1/2$ and $1$. These are asymptotic limits, not measured savings.
The last limit holds uniformly over the policies in
Proposition~\ref{prop:acquisition}; better query ordering cannot remove
the entire gap to the hindsight certificate in this model.}
\label{fig:three-limits}
\end{figure}
\FloatBarrier

\begin{proof}[Proof of Proposition~\ref{prop:acquisition}]
Fix $q\in(0,1)$ and the common predictions before drawing ranks and losses.
First take two candidates and use $d_i=w_{Ai}-w_{Bi}$ from the preceding
proof. Fix the ranks and the policy's random seed, independent of the losses.
Extend its queries after stopping, if needed, until every row has been read.
If $I_t$ is the row chosen at time $t$, then
\[
 S_t=\sum_{s=1}^t d_{I_s}(\ell_{I_s}-q)
\]
is a martingale: $I_t$ depends only on ranks and previously read losses,
so the next loss is still Bernoulli$(q)$. Its conditional second moment is
$\mathbb E[S_n^2\mid\text{ranks}]=q(1-q)\sum_i d_i^2\le4q(1-q)n^3$.
Consequently, for any fixed $a>0$, the maximal inequality gives
\[
 \Pr\{\max_{t\le n}|S_t|\ge an^2\mid\text{ranks}\}
 \le \frac{4q(1-q)}{a^2n}.
\]
This bound is uniform over policies and their random seeds.

For unread rows $U$, write $W_+=\sum_{i\in U,d_i>0}d_i$ and
$W_-=\sum_{i\in U,d_i<0}|d_i|$. Since $\sum_i d_i=0$, the exact completion
interval for $r_A-r_B$ after $t$ queries is
\[
 [S_t-qW_+-(1-q)W_-,\quad S_t+(1-q)W_++qW_-].
\]
Let $p=\min(q,1-q)>0$. If $|S_t|<p(W_++W_-)$, the interval contains
both negative and positive values, so neither candidate can be certified,
regardless of the tie rule. Thus any valid stopping time requires
$|S_t|\ge p\sum_{i\in U}|d_i|$.

Fix $\varepsilon>0$ and choose $x>0$ with $2x-x^2<\varepsilon/2$.
The permutation lemma in the preceding proof implies, with probability
tending to one, that fewer than $\varepsilon n/2$ rows have
$|d_i|\le2nx$. Every unread set of at least $\varepsilon n$ rows therefore
has $\sum_{i\in U}|d_i|\ge\varepsilon x n^2$. The maximal bound with
$a=p\varepsilon x$ shows that the probability of a valid stop with this
many unread rows tends to zero, uniformly over policies. Hence $N_\pi/n\to1$.

For fixed $K>2$, apply the same argument to every candidate pair along
the policy's queries. A union bound over the finitely many pairs ensures
that, while $\varepsilon n$ rows remain, no pairwise winner is certified.
Certifying the overall winner requires certifying it against each rival,
so again $N_\pi/n\to1$. For $K=2$, Proposition~\ref{prop:random} gives
$C(y)/n\to1/2$, so subtraction gives the stated gap.
\end{proof}

\FloatBarrier
\subsection{Tolerance, multiclass predictions and other objectives}
\label{sec:extensions}
\paragraph{Certificates with a risk tolerance.}
For $\tau>0$, $C_\tau(y)$ is the smallest label set certifying some candidate
within $\tau$ AUGRC of the minimum under every completion: replace $D_j$ by
$[-\lfloor2n^2\tau\rfloor-L_{jk}(\varnothing)]_+$ and minimize over the
returned candidate; the $K-1$ bound carries over. At $\tau=0$, set
$C_0(y)=C(y)$ and use the original deficits in \eqref{eq:cert-gain},
including the index tie rule.

\paragraph{Fixed multiclass predictions.}
The finite-pool bounds and certificates also apply when candidates share a multiclass
prediction $\hat y_i$. Set $e_i=\mathbf1\{\hat y_i\ne y_i\}$; then
$r_j=\sum_iw_{ji}e_i$, so use $c_j=0$, $a_{ji}=w_{ji}$ and query $e_i$.
With at least two classes and unrestricted unread labels, every binary
error completion is realizable: choose the predicted class for $e_i=0$
and any other class for $e_i=1$. Thus the bounds, certificates and stopping
test remain exact for confidence-score selection.

\paragraph{Other evaluation objectives.}
The stopping test, certificate LP and prelabel bound apply whenever each
candidate's score is a sum of binary losses with nonnegative weights fixed
before labeling. Examples include weighted error and selective error at a
fixed coverage: accepting $m\ge1$ rows chosen before labeling gives each
accepted row weight $1/m$.
Rational weights require a common integer scaling for exact ties.
The quarter and half limits use the AUGRC rank weights specifically.

\paragraph{Beyond AUGRC.}
The area under the risk--coverage curve (AURC) also weights each error by
its rank \citep[Appendix A.1.1]{traub2024overcoming}. Averaging these weights
within confidence ties gives the fixed-weight form in Section~\ref{sec:cert}. For differing multiclass predictions, a categorical formulation is needed.
Shared multiclass predictions admit the common error-bit representation
in Section~\ref{sec:cert}.

\section{Using the information bounds}
\label{sec:use}
\paragraph{Before labeling: check the budget.}
Given a budget of $B$ labels, compute $\underline C$ from the predictions and ranks.
If $B<\underline C$, no acquisition policy can certify the full-pool choice
within that budget, for any label vector. Budgets $B\ge\underline C$
remain unresolved by this necessary condition.
Computing all pair bounds by sorting costs $O(K^2n\log n)$.

If the budget is ruled out, changing the query policy alone cannot help.
Instead, increase the budget, specify an acceptable risk tolerance, or
revise the candidate set before reading labels, then recompute the bound.
This redirects effort to the evaluation design before any label is collected.

\paragraph{During labeling: choose and query a row.}
Reading row $i$ shrinks the width $H_{jk}-L_{jk}$ by $|b_{jki}|$ whatever its
label. A simple static order reads rows by decreasing
\begin{equation}
 s_i=\max_j a_{ji}-\min_j a_{ji},\label{eq:range}
\end{equation}
the largest per-row contribution to any pairwise width. The pair-sum order
instead ranks $\sum_{j<k}|a_{ji}-a_{ki}|$. Adaptive versions recompute these
scores after removing candidates that lose to another under every completion,
including the tie rule. A query can shrink a width without raising a lower
bound (Figure~\ref{fig:worked}); we evaluate these heuristics against the
minimum certificate. The variance-minimizing sampler of
\citet{hara2024active}, called VMA in the implementation of
\citet{okanovic2025selector} and used here with AUGRC-weighted binary error
and uniform labels, samples row $i$ with probability proportional to
$.01+\frac1{2n}\sum_{j<k}|a_{ji}-a_{ki}|$.

Initializing the bounds costs $O(K^2n)$; updating them after a query costs
$O(K^2)$, in addition to the cost of choosing the next row.
Figure~\ref{fig:procedure} summarizes the procedure; Theorem~\ref{thm:certsize} supplies its hindsight benchmark.

\begin{figure}[!htbp]
\centering
\fbox{\begin{minipage}{.94\columnwidth}\small
\textbf{Input:} predictions $\hat y_{ji}$ and confidence ranks of $K$
frozen candidates on $n$ unlabeled rows; a query policy.\par
\textbf{Output:} the full-pool AUGRC winner, and the labels read.
\begin{enumerate}\setlength{\itemsep}{1pt}
\item Compute $c_j$, $a_{ji}$ from \eqref{eq:linear} and every $L_{jk}$ from
\eqref{eq:lower} with all rows unread.
\item If some $k$ satisfies \eqref{eq:stop}, return $k$ and the labels read.
\item Otherwise choose an unread row $i$ using the policy, read $y_i$, and
add $b_{jki}y_i-\min(0,b_{jki})$ to every $L_{jk}$. Go to 2.
\end{enumerate}
\textbf{Afterwards} (benchmark, needs all labels): bound $C(y)$ by
LP relaxation and rounding in Theorem~\ref{thm:certsize}; compare with the labels read.
\end{minipage}}
\caption{\textbf{Certified comparison in three steps.} The loop never
reads a label twice and stops as soon as every unread completion agrees.
Any policy choosing unread rows is valid; Section~\ref{sec:experiments} compares several.}
\label{fig:procedure}
\end{figure}

\paragraph{When the budget is exhausted.}
Return $k_B\in\arg\min_k\delta_k(O)$, where
\[
 \delta_k(O)=\frac{\max\{0,\max_{j\ne k}[-L_{jk}(O)]\}}{2n^2}.
\]
Taking the maximum over rivals gives candidate $k$'s exact worst-case
excess AUGRC over all unread completions. Thus $k_B$ comes with a guaranteed
risk gap, computed from the current bounds without further labels.
A zero gap guarantees minimum risk; the prescribed tie-breaking winner
still requires \eqref{eq:stop}. For example, in Figure~\ref{fig:overview},
reading $y_1=1$ gives $\delta_B=0$: $B$ is optimal for either remaining
label. Yet $y_2=1$ gives a tie, whose prescribed winner is $A$.
Thus one query can guarantee minimum loss without fixing the tie-breaking
choice.

\paragraph{After labeling: measure sufficient information.}
Report lower and upper bounds on $C(y)$ alongside the number of labels read,
$N_{\rm read}$. Together they bound how many extra labels the policy read;
finding a minimum certificate
without the unread labels can itself require extra queries
\citep{deshpande2014approximation}. Proposition~\ref{prop:acquisition}
quantifies this distinction in the independent-order model. The benchmark applies to every policy
run inside the procedure of Figure~\ref{fig:procedure}.

\section{Experiments}
\label{sec:experiments}
We ask which labels selective comparisons need and how stopping rules,
query orders and tolerance affect label cost. Six prospectively fixed datasets extend
the original three. The tabular follow-ups are retrospective: matched-model,
second-generator and confidence-score protocols were fixed before their
runs, after primary benchmark results were known. The new deep-model
extension was fixed before inference or evaluation and chooses confidence
scores for shared multiclass predictions.

\FloatBarrier
\subsection{Candidates and datasets}
\label{sec:real}
Three datasets came first: Breast Cancer, Wine (class 0 versus the rest) and
Digits (odd versus even), from scikit-learn \citep{pedregosa2011scikit}. Six
UCI datasets were then fixed, with the protocol, before their tables were
downloaded: Ionosphere, Sonar, Banknote, Haberman, Spambase and MAGIC
(208--19,020 rows, 3--60 numeric features). Each dataset uses three stratified
70/30 train/evaluation splits; evaluation pools have 54--5,706
rows (details in the artifact).

We construct candidates from feature subsets (panels). A training-only
backward search removes quartile indicators while keeping the
minority-count error within one of eight budgets. This error is the sum of
minority-label counts over identical feature patterns. Each panel receives a decision tree (minimum leaf size 2)
or logistic regression (\texttt{C=1}), with maximum class probability as
confidence. Comparing fixed sets of four or eight panels gives
$9\times3\times2\times2=108$ conditions. All candidates are fixed before
evaluation labels are read.

\paragraph{Panel construction.}
Training-row quartiles define binary indicators, ordered by absolute
class-conditional mean difference. A fixed-order backward search removes
indicators while the minority-count error stays below
$E_{\rm full}+\lfloor\epsilon n_{\rm train}\rfloor$, where $n_{\rm train}$ counts training rows and $E_{\rm full}$
uses all indicators. No retained indicator can then be removed within the
budget. The eight values are $\epsilon\in\{0,.005,.01,.02,.04,.08,.12,.16\}$.
Four-candidate collections use indices $0,2,4,6$; eight-candidate collections
use all eight. The artifact provides exact orders, seeds and fitted outputs.
\FloatBarrier
\subsection{Which labels are needed}
\label{sec:needed}
\paragraph{Agreement rows carry the decision.}
We reveal the label of every row on which at least two candidates disagree
and leave the other rows unrestricted. This settles the accuracy choice in all
108 conditions. It settles the AUGRC choice in none: for every condition, two
completions of the agreement rows have different, unique AUGRC winners.
Disagreement rows are $20.91\%$ of a pool on average ($13.5$--$36.5\%$ across
datasets); after reading them,
certifying the actual winner still needs agreement-row labels amounting to
$41$--$42\%$ of the pool.

\paragraph{Budgets ruled out before labeling.}
We replay all 108 frozen conditions using only predictions and ranks to
compute $\underline C$. For every budget fraction $b$, we count conditions
with $\lfloor bn\rfloor<\underline C$ (Figure~\ref{fig:budget}). This
retrospective analysis rules out a 10\% budget in all 108 conditions and
a 20\% budget in 96/108. At 25\% the count is 34/108; at 30\% it is 11/108.
These are illustrative budget levels, not observed operational constraints.
The artifact retains every bound threshold. With $\tau=.005$, a 20\%
budget is still ruled out in 86/108 conditions. The lower bound leaves other budgets unresolved. Figure~\ref{fig:budget}
shows what static range certifies within each budget: at an 80\% budget, exact
selection certifies 30/108 choices; tolerance $.005$ certifies 76/108.

\begin{figure}[!htbp]
\centering\includegraphics[width=\linewidth,alt={Budget outcomes on nine datasets: insufficient budgets ruled out before reading labels, choices certified within budget, and unresolved comparisons. Budgets are 20, 50 and 80 percent.}]{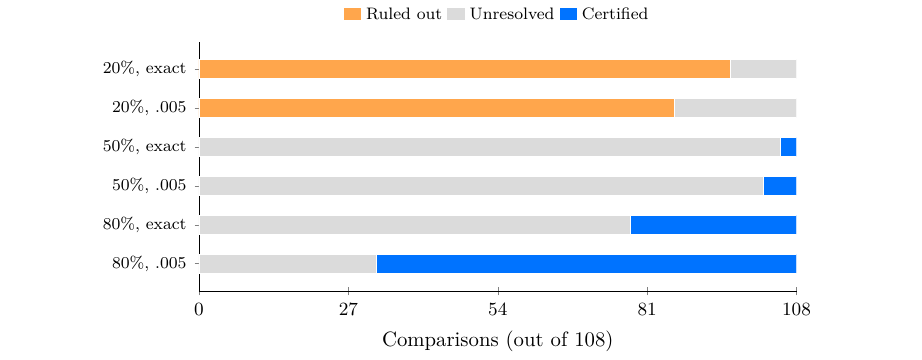}
\caption{\textbf{From a budget bound to a certified decision.}
All 108 panel comparisons, with budget and tolerance shown at left.
Orange: ruled out before labeling. Blue: certified within budget by
static range. Gray: unresolved.
At an 80\% budget, allowing $\tau=.005$ raises the certified count
from 30 to 76. These are illustrative requirements; all budget and
tolerance settings are retained in the artifact.}
\label{fig:budget}
\end{figure}

\paragraph{Minimum certificates.}
Theorem~\ref{thm:certsize} brackets $C(y)$ in all 108 conditions to within
four labels; 16 are exact. Across all nine datasets, the mean size lies between
$56.37\%$ and $57.16\%$ of the pool. On the six added datasets, certificates need
$54.61$--$55.38\%$ of the pool on average (Table~\ref{tab:extension}); on
the original three, $59.88$--$60.70\%$. The prelabel bound averages $24.50\%$ using only predictions and ranks.
At $n=5{,}706$, $K=8$, the verified LP pipeline took 22--26 ms
(Section~\ref{sec:reproducibility}).
\paragraph{Choosing confidence scores with identical predictions.}
On the same 27 outer splits, we split training rows 70/30 between classifier
and confidence training. Quartile indicators use classifier-training rows
only; all nonconstant indicators are retained. Each tree or logistic model
is fixed while comparing native confidence with negative predicted loss
\citep{franc2023optimal}. The loss regressor is a random forest with 100 trees,
depth at most 4 and minimum leaf size 20, fixed before this run without
tuning on its evaluation results. It uses the indicators and class
probability as inputs. All 54 fitted comparisons were frozen before this
run's evaluation; the pool labels had been used in earlier experiments.

Static range reads $64.46\%$ of evaluation labels, versus $85.48\%$ for random
orders (ten seeds); the prelabel bound and exact minimum certificate average
$23.38\%$ and $35.84\%$. Native confidence wins strictly in 35 comparisons,
loss regression in 17, with two ties. Means weight datasets equally.

Mean full-pool AUGRC is $.04714$ for native confidence, $.05250$ for loss
regression and $.04377$ for the selected score. Improvement over native is
nonnegative by construction on this same pool; the acquisition result is
recovering its choice with fewer labels. Labels used to fit the classifier
and loss regressor are training costs, excluded from evaluation-label savings.
For $K=2$, the minimum certificates above are computed exactly by sorting
the realized gains. Native confidence is selected in both ties.

\FloatBarrier
\subsection{Confidence choice for deep models}
\paragraph{What changes when the classifier stays fixed.}
Different confidence scores can reverse the acceptance order without
changing any predicted class. Consider two images with class-probability
vectors $(.60,.39,.01)$ and $(.55,.23,.22)$. Both predict class 1.
Maximum softmax probability ranks the first image higher ($.60>.55$),
whereas the top-two probability margin ranks the second higher
($.32>.21$). Selecting the confidence score therefore selects which
predictions to accept first. The following experiments certify that
choice using labels from the evaluation pool.

\label{sec:deep}
ResNet-20 and VGG-11-BN on CIFAR-10/100 give the original four conditions \citep{krizhevsky2009learning,chen2025cifar}. Each 10,000-image
pool compares maximum softmax probability, negative entropy and the
top-two probability margin for the same fixed predictions. The comparison certifies a score on this fixed evaluation pool.
The protocol and its relation to checkpoint development are described below.

\begin{table}[!htb]
\centering\small
\caption{\textbf{Confidence choice across five architecture families.}
Three scores share predictions on 10,000 images per condition. Numerical entries
after the model name are percentages of evaluation labels. Minimum is the offline
certificate enclosure; Read uses static range. Original and follow-up conditions
share the two CIFAR test pools.}
\label{tab:deepfamily}
\begin{tabular}{llrrrr}\toprule
CIFAR & Model & Prelabel & Minimum & Read exact & Read $\tau=.0005$\\
\midrule
\multicolumn{6}{l}{Original four conditions}\\
10 & ResNet-20 & 24.34 & 45.42--45.43 & 90.95 & 59.66\\
10 & VGG-11-BN & 20.88 & 49.73 & 88.41 & 54.04\\
100 & ResNet-20 & 15.33 & 33.85 & 68.07 & 57.20\\
100 & VGG-11-BN & 12.98 & 66.89--66.90 & 79.26 & 50.23\\
\midrule
\multicolumn{6}{l}{Six architecture-family follow-up conditions}\\
10 & MobileNetV2 x1.0 & 20.01 & 48.64 & 87.02 & 65.08\\
10 & ShuffleNetV2 x1.0 & 20.40 & 46.97--46.98 & 85.60 & 67.13\\
10 & RepVGG A0 & 19.08 & 50.26 & 86.31 & 66.64\\
100 & MobileNetV2 x1.0 & 14.79 & 64.33 & 69.85 & 57.04\\
100 & ShuffleNetV2 x1.0 & 15.30 & 60.76 & 73.13 & 60.58\\
100 & RepVGG A0 & 15.68 & 61.98 & 83.53 & 64.74\\
\bottomrule\end{tabular}\end{table}

All four exact choices match the full-pool winner, reading
68.07--90.95\% of labels (Table~\ref{tab:deepfamily}); each score wins somewhere.
At the grid's first positive tolerance, $\tau=.0005$, static range reads
50.23--59.66\%. This certifies a risk within $\tau$ of the best candidate
under every unread completion; exact selection additionally requires
identifying that winner.
Figure~\ref{fig:deepbudget} maps budgets to the smallest tested tolerance
on this frozen grid of completed runs.

As a post hoc illustration, CIFAR-100/VGG needs 3,949 and 3,109 labels
to certify the winner against its rivals separately, but 6,689--6,690
to beat both at once.

The six architecture-family follow-up conditions in Table~\ref{tab:deepfamily}
read 69.85--87.02\% for exact selection and 57.04--67.13\% at
$\tau=.0005$. The original and follow-up protocols share the two test pools.

\begin{figure}[!htbp]
\centering\includegraphics[width=\linewidth,alt={For pretrained image classifiers, the curves show the guaranteed AUGRC tolerance as a function of the evaluation-label budget. Increasing the budget tightens the guarantee.}]{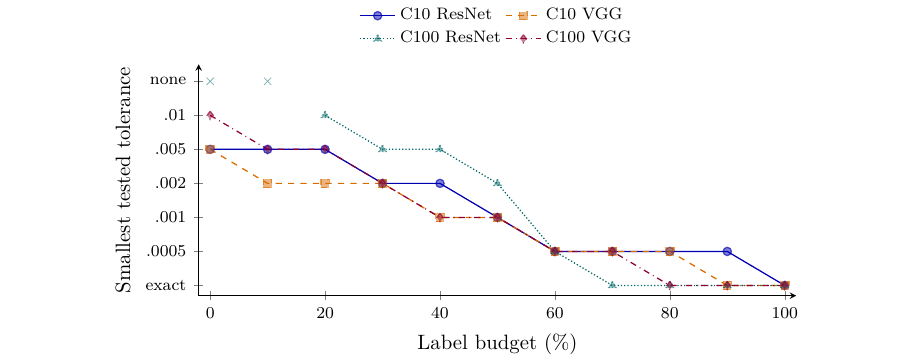}
\caption{\textbf{Choose an evaluation tolerance for a label budget.}
Smallest certified tolerance on the six-value grid, using static range
on the four original pools ($n=10{,}000$). Marks show all eleven budgets;
crosses at ``none'' mean no tested tolerance is certified. Lines guide the eye.
The tolerance axis lists grid values, not equal numerical intervals.
This is a policy-specific guarantee, not the optimal attainable tolerance.}
\label{fig:deepbudget}
\end{figure}

\subsubsection{Protocol and verification}
\paragraph{Pretrained multiclass confidence choice.}
We use the official CIFAR test splits in their original order and the public
checkpoints of \citet{chen2025cifar}, with the repository's normalization,
no augmentation and no further training. CPU inference uses float32;
softmax and scores use float64, temperature 1. The candidates, in tie
priority order, are MSP, negative predictive entropy and the top-two
probability margin. Each model supplies one fixed prediction per image.
Scores and rank weights were hashed before opening evaluation labels for
this comparison; no class-count constraints enter the certificate.
The original checkpoints used test-split validation during development.

The artifact reports all four three-score comparisons, all twelve dependent
pairwise comparisons, ten random-order seeds and all six tolerances
$0,.0005,.001,.002,.005,.01$ at eleven budgets $0,.1,\ldots,1$.
All budgets were fixed before inference.
Figure~\ref{fig:deepbudget} inverts this grid after evaluation: at each
budget it reports the smallest tested tolerance whose stopping count
fits. It does not interpolate between tolerances or optimize the policy.
For three scores, mean acquisition is 81.67\% for static range and
98.43\% for random. Pairwise minimum certificates range from 28.41\% to 49.46\%; static range
reads 50.75\%--88.84\%. Full-pool choices are recovered in every exact run.
All 176 acquisition paths and 1,666,594 prefixes were independently checked,
including integer certificate subsets and rational dual bounds. Exhaustive
three-class completion checks on 40 small pools verify the shared-error
reduction. Inference takes 5.9--9.5 seconds per model on four CPU threads;
the four three-candidate LP runs take 21--32 ms each, excluding loading.
These are single-run timings. Training costs are excluded from evaluation
savings. Public checkpoint provenance, file hashes and all outputs accompany
the code; raw images and weights are retrieved separately.

\paragraph{Architecture-family follow-up.}
After the original four outcomes, we fixed six further conditions before new
inference: MobileNetV2 x1.0, ShuffleNetV2 x1.0 and RepVGG A0 on both datasets,
from the same pinned repository. Scores, preprocessing, query orders, all
pairwise comparisons and the full tolerance/budget grid are unchanged.
Table~\ref{tab:deepfamily} reports all ten conditions; they share two test pools.
The six new exact choices match their full-pool winners. Mean static acquisition
is 80.91\% for exact selection and 63.54\% at $\tau=.0005$, compared with
81.67\% and 55.28\% in the original four. These descriptive means use dependent conditions. All 24 new primary/paired
comparisons, 264 paths and 2,504,903 prefixes pass the independent audit.
Preprocessing checks precede label access; CPU inference took 43.7--82.4 seconds
per added model. All pre-specified conditions are retained.

\FloatBarrier
\subsection{Stopping rule versus acquisition order}
\label{sec:stopping}
\begin{figure}[!htbp]
\centering\includegraphics[width=\linewidth,alt={Mean label fractions decrease from 99.43 percent for separate intervals to 90.15 for centered intervals, 85.21 for shared intervals and 80.45 for adaptive acquisition. The minimum certificate is 54.61 to 55.38 percent, leaving a gap of 25.07 to 25.84 percentage points.}]{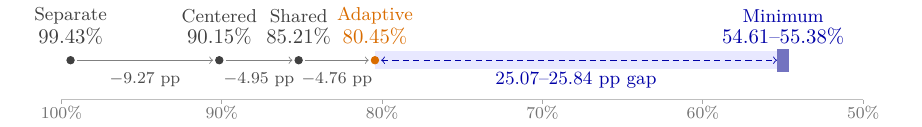}
\caption{\textbf{Where the label savings come from.} Mean label fractions
on six added datasets share a common percentage scale. The first three tests
share a static order; adaptive pair-sum changes it. Shading shows the gap to the minimum
certificate with all labels known (dark band).
Differences use unrounded means.}
\label{fig:savings}
\end{figure}
\begin{table}[!htbp]
\centering\small
\caption{Labels read (\%) by three stopping rules under the same static
order: separate intervals (S), centered separate intervals (CS) and the shared
test (J). $\underline C$ is the prelabel lower bound of
Theorem~\ref{thm:rankbound}, known before any label; $C$ brackets the minimum
certificate, computed with hindsight. Each dataset has 12 conditions.}
\label{tab:extension}
\setlength{\tabcolsep}{5pt}
\begin{tabular}{lrrrrr}\toprule
Dataset & S & CS & J & $\underline C$ & $C$\\\midrule
Ionosphere & 99.2 & 82.5 & 78.4 & 21.2 & 51.65--52.59\\
Sonar & 99.1 & 87.6 & 82.3 & 22.8 & 49.47--51.98\\
Banknote & 99.9 & 95.1 & 83.7 & 25.2 & 60.05--60.40\\
Haberman & 99.7 & 90.7 & 88.8 & 26.7 & 56.16--56.88\\
Spambase & 99.6 & 94.5 & 92.3 & 24.0 & 57.80--57.89\\
MAGIC & 99.1 & 90.5 & 85.9 & 20.0 & 52.50--52.54\\
\midrule
New 72 & 99.4 & 90.2 & 85.2 & 23.3 & 54.61--55.38\\
Original 36 & 99.4 & 92.3 & 81.7 & 26.9 & 59.88--60.70\\
\bottomrule\end{tabular}\end{table}

Figure~\ref{fig:savings} separates the two sources of savings. First we fix the static order
\eqref{eq:range} and change only the stopping rule: separate risk intervals;
separate intervals after subtracting the common label term
$\sum_it_iy_i$ with $t_i=(\min_ja_{ji}+\max_ja_{ji})/2$, which gives midpoint
centering; and the shared test \eqref{eq:stop}. Exact duplicate
candidates are merged for all three rules (Figure~\ref{fig:trace} traces
one condition). On the original 36 conditions,
the shared test saves $17.66$ points over separate intervals and $10.58$
over centering (Table~\ref{tab:extension}). On the 72 added conditions,
the three rules read $99.43\%$, $90.15\%$ and $85.21\%$ of the pool.
The shared test saves $4.95$ points over centering (median $2.46$; earlier in
$56/72$ conditions, tied in 16): a gain beyond common-term centering.

\begin{table}[!htbp]
\centering\small
\caption{Acquisition policies with the shared certificate: mean evaluation labels
read (\%). Original and added datasets are
separated. Excess brackets the mean acquisition cost above the offline
minimum certificate on the added six datasets, in percentage points.}
\label{tab:labels}
\setlength{\tabcolsep}{5pt}
\begin{tabular}{lrrr}\toprule
Acquisition & Original 3 & Added 6 & Excess (pp)\\\midrule
Random & 95.36 & 95.32 & 39.94--40.72\\
Static range & 81.71 & 85.21 & 29.82--30.60\\
Adaptive range & 81.14 & 80.75 & 25.37--26.14\\
Static pair-sum & 81.83 & 85.41 & 30.03--30.81\\
Adaptive pair-sum & 80.23 & 80.45 & 25.07--25.84\\
VMA, uniform labels & 90.64 & 90.38 & 34.99--35.77\\
Model Selector & 95.98 & 94.75 & 39.37--40.14\\
\bottomrule\end{tabular}\end{table}

Then we fix the shared test and change the order. We run seven policies:
random; static and adaptive range; static and adaptive pair-sum; uniform-label
VMA; and Model Selector's acquisition \citep{okanovic2025selector} with its
default noise parameter $.46$, each supplying queries to the same certificate test
(randomized policies use ten seeds). This comparison measures acquisition
cost under a common guarantee; it does not rank the original methods under
their different estimation or selection criteria. On the six added datasets, adaptive
pair-sum reads $80.45\%$ of the pool against $85.21\%$ (mean saving $4.76$, median $4.14$
points; fewer labels in $60/72$ conditions, equal in four, more in eight), yet
remains $25.07$--$25.84$ points above the certificate.
Across the nine datasets, a Friedman permutation test rejects equal policy
ranks: none of 99,999 within-dataset permutations reached the observed
statistic ($p_{\rm MC}=10^{-5}$ with the plus-one correction, its minimum
reportable value). Nemenyi comparisons
(Figure~\ref{fig:empirical}; \citealp{demsar2006statistical}) separate
adaptive pair-sum from random, VMA and Model Selector, but not from the other heuristics.

\begin{figure}[!htb]
\centering\includegraphics[width=\linewidth,alt={For Ionosphere with 106 rows, lower margin bounds become positive after 73 labels with the shared test, 77 with centered intervals and 106 with separate intervals. An inset shows the tie at 72 labels. The minimum certificate needs 49 to 50 labels and the prelabel bound is 16.}]{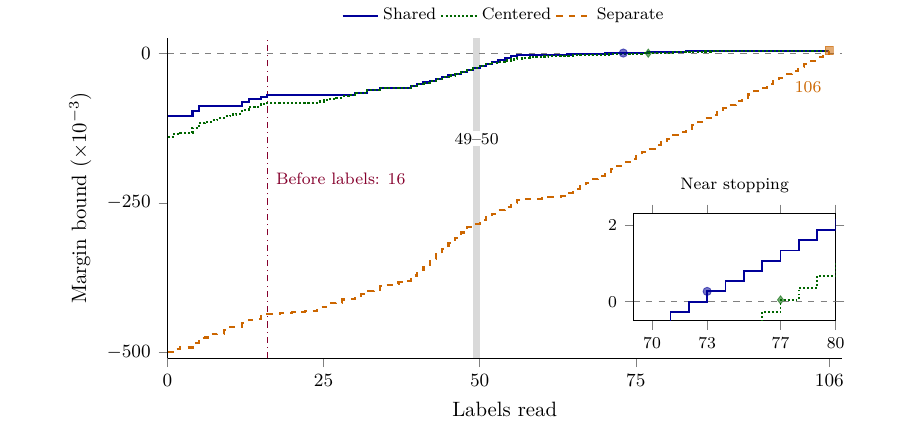}
\caption{\textbf{One condition, label by label.} Ionosphere, split 731,
trees, $K=4$, $n=106$, static range order. Curves bound
$\min_{j\ne k}(\widehat A_j-\widehat A_k)$ for the full-pool winner $k$. Dots mark the first strictly positive bounds: 73 labels (shared),
77 (centered), and 106 (separate). At 72 labels the shared bound is zero
and the tie rule prevents stopping (inset). Hindsight brackets the minimum
at 49--50 labels. The prelabel lower bound is 16: a budget of 15 cannot
certify a choice under any acquisition policy.}
\label{fig:trace}
\end{figure}

\begin{figure}[!htb]
\centering\includegraphics[width=\linewidth,alt={Critical-difference diagram of acquisition-policy ranks across nine datasets. Lower average ranks mean fewer labels read under the common exact stopping rule; connected methods are not separated by the stated post-hoc test.}]{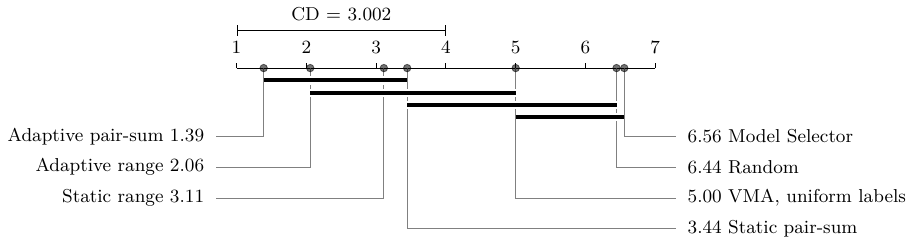}
\caption{\textbf{Acquisition ranks across nine datasets.}
Lower is better; each dataset averages its 12
conditions. Bars join policies with no detected pairwise difference (Nemenyi,
$\alpha=.05$, critical difference $=3.002$). Table~\ref{tab:labels} lists all costs.}
\label{fig:empirical}
\end{figure}

\FloatBarrier
\subsection{Tolerance and candidate-set sensitivity}
\label{sec:tolerance}
We replay the recorded static-range and adaptive pair-sum orders on all 108
conditions with $\tau\in\{0,.0005,.001,.002,.005,.01\}$ and compute $C_\tau$
for the same comparisons. The tolerance and candidate-set analyses are retrospective: their protocols
were fixed after the acquisition costs were known, and the tolerance grid
follows an exploratory calculation.

At $\tau=.005$, the minimum certificate averages $44.83$--$45.69\%$
and adaptive acquisition reads $69.74\%$, a saving of $10.64$ percentage
points. It remains $24.05$--$24.91$ points above the minimum
(Figure~\ref{fig:sensitivity}a). The returned candidate changes in $28/108$
conditions, always within tolerance. The artifact retains the full grid.

\paragraph{Candidate-set sensitivity.}
To assess candidate dependence, we use nested panel indices
$\{0,4\}$, $\{0,2,4,6\}$ and $\{0,\ldots,7\}$ from each saved collection,
without refitting. Minimum certificate sizes for exact selection average $38.42\%$,
$53.78$--$54.41\%$ and $58.95$--$59.90\%$ for $K=2,4,8$;
adaptive acquisition reads $65.57\%$, $79.52\%$ and $81.24\%$
(Figure~\ref{fig:sensitivity}b). Candidate identities and winners can change with $K$.

\paragraph{Tolerance contrasts.}
For the original 108 comparisons, moving from zero tolerance to $\tau=.005$
reduces the mean minimum certificate by $10.67$--$12.33$ percentage points.
At $\tau=.01$, the prelabel bound is $21.67\%$ and certificates need
$39$--$40\%$ on average; their responses to tolerance differ.
The budgets in Figure~\ref{fig:budget} use static range, while
Figure~\ref{fig:sensitivity} also reports adaptive acquisition.

\begin{figure}[!htb]
\centering\includegraphics[width=\linewidth,alt={Label requirements and acquisition costs versus AUGRC tolerance and nested candidate-set size. Tolerance and the set of candidates change the decision being certified.}]{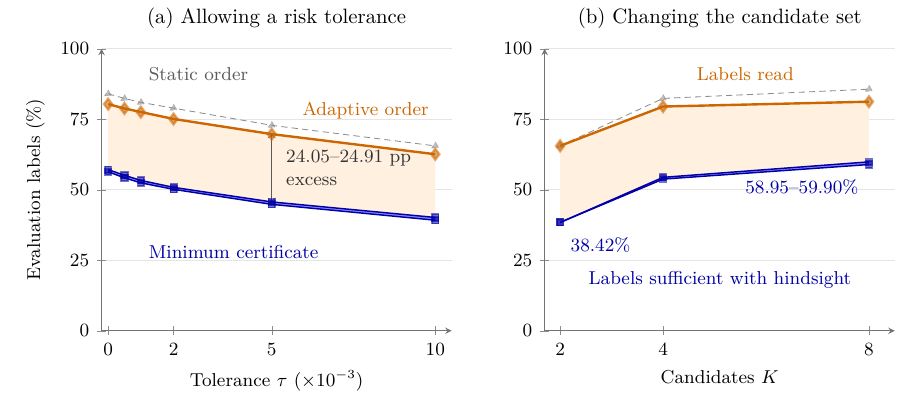}
\caption{\textbf{Decision requirements and acquisition costs change together.}
(a) Risk tolerance on all 108 primary conditions ($K=4,8$).
(b) Nested sets of 2, 4 and 8 candidates on all 54 fitted collections.
Each panel averages conditions within each of nine datasets, then datasets.
The narrow blue bands bound the mean minimum certificate. The pale area
shows the gap from its upper bound to adaptive acquisition; the annotation
gives the full excess interval. Neither shading is a confidence interval.
Points are evaluated settings, joined for readability. The static and adaptive
orders (static range and adaptive pair-sum) coincide at $K=2$.}
\label{fig:sensitivity}
\end{figure}

\FloatBarrier
\subsection{Matched pools and a second candidate generator}
\label{sec:followup}
\paragraph{Pool-size-matched comparison.}
The matched-model, second-generator and confidence-score protocols were
fixed before their respective runs, after primary results on these
benchmarks were known; all three are retrospective analyses.
For each dataset's pool size, each $K\in\{4,8\}$ and error rate
$q\in\{.1,.3\}$, 100 independent uniform-order replicates give 3,600 pools.
Predictions are identical; only $n,K$ are matched to the real collections.
Every certificate is enclosed by an integer-checked subset and a rational
weak-dual lower bound. Replicates, error rates and candidate counts receive
equal weight within each dataset. Figure~\ref{fig:followup}a compares
these means with the six fitted collections per $K$. The fitted dataset enclosures range from $48.50$--$48.84\%$ (Digits)
to $68.67$--$70.06\%$ (Wine). The matched model's
rank-only lower bound averages $25.88\%$, versus $24.50\%$ for the panels.

The 3,600 matched pools give mean certificate bounds of
$60.15$--$61.15\%$, versus $56.37$--$57.16\%$ for the fitted panels.

\paragraph{A second panel generator.}
Empirical mutual information ranks the binary
indicators using training rows only. Ties use the original indicator index.
The four panel sizes are $\lceil p/8\rceil,\lceil p/4\rceil,\lceil p/2\rceil,p$,
where $p$ is the number of nonconstant training indicators. We retain the
same splits and classifier settings and freeze all 54 fitted collections
before reading evaluation labels. Integer witnesses and rational bounds
are checked independently; predictions are refitted and query paths replayed.
The prelabel bound averages $20.42\%$, minimum certificates
$45.81$--$46.27\%$, and adaptive acquisition $70.72\%$.
Static range reads $72.19\%$ on average; adaptive pair-sum reads fewer
labels in 34 conditions, ties in nine, and reads more in 11. One collection
has duplicate loss functions; none permits a zero-label choice. Disagreement labels leave $53/54$ unresolved; adaptive acquisition remains
$24.45$--$24.91$ percentage points above the minimum certificate
(Figure~\ref{fig:followup}b).

\begin{figure}[!htb]
\centering\includegraphics[width=\linewidth,alt={Two follow-up comparisons: pool-size-matched empirical and model certificate fractions, and results from a second candidate generator based on mutual information.}]{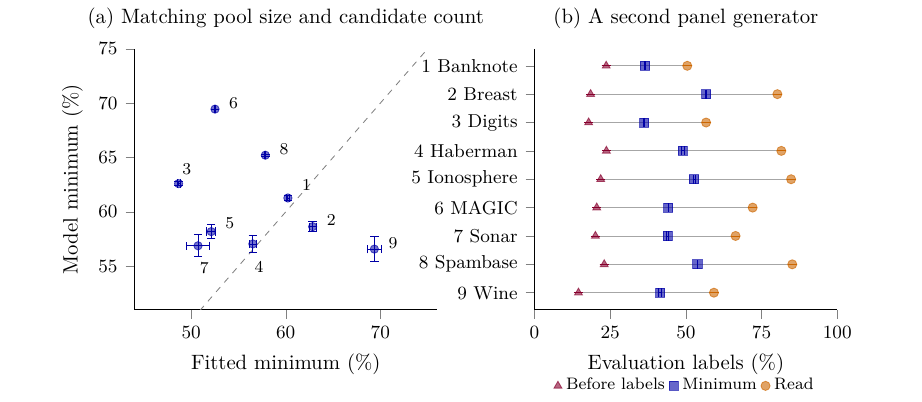}
\caption{\textbf{Two checks on the information benchmark.}
(a) Minimum-certificate fractions: fitted consistency panels versus random
orders matched by $n,K$. Each point is one dataset; bars enclose the mean
minimum, not sampling uncertainty. The diagonal is equality.
(b) The second, mutual-information panel generator: prelabel lower bound,
minimum-certificate enclosure and adaptive acquisition, averaged over six
conditions per dataset. Lines connect the same dataset; sizes are not
matched to the consistency panels.}
\label{fig:followup}
\end{figure}

\FloatBarrier
\subsection{Finite-pool simulations and correlated orders}
\label{sec:simulations}
\paragraph{Independent orders.}
The simulations use $n=50$--$5706$ for two candidates and
$n=100,500,2000$ for $K=2,4,8$. They illustrate the limits in
Proposition~\ref{prop:random}; the full grids remain in the artifact.
For two candidates, mean certificate fractions range from $.37$ at $n=50$
to $.46$ at $n=5706$; the prelabel bound rounds to $.25$ from $n=500$.
These are finite-pool observations. The certificate limit for $K>2$ is left open.

\paragraph{Correlated orders.}
Gaussian-copula orders with $\rho\in\{0,.3,.6,.9,.99\}$ gave estimated
rank-bound limits $.250,.237,.223,.209,.204$. This is a finite grid, not a
bound for the whole family: at $\rho=1$ the two orders coincide and the
requirement is zero. Real orders have mean pairwise Spearman $.51$
(dataset means $.36$--$.76$, excluding constant-confidence pairs in 14
conditions). Correlation alone does not determine the bound: a cyclic shift
by $0<s\le n/2$ rows gives $\underline C=s$, so $n=1000,s=90$ yields
Spearman $.51$ with $\underline C/n=.09$.

\section{Related work}
Selective prediction orders acceptance by confidence or conditional risk
\citep{chow1970optimum,elyaniv2010foundations,geifman2017selective,franc2023optimal}.
We use AUGRC as defined by \citet{traub2024overcoming}. Feature selection
for classification with a reject option \citep{hanczar2008classification}
and consistency-based search \citep{dash2003consistency,shin2017scwc}
address candidate construction. We fix candidates before acquiring
evaluation labels.

\citet{sawade2012active} derive sampling distributions that asymptotically
maximize the power of a risk-difference test under a fixed labeling budget.
Later label-efficient evaluation and selection
include Active Testing
\citep{kossen2021active}, loss-difference sampling \citep{hara2024active},
Model Selector \citep{okanovic2025selector}, CODA \citep{kay2025consensus},
and structured best-arm identification \citep{huang2017structured}.
We ask
how many labels suffice to fix a selective comparison under every unread
completion, and compare acquisition orders under this common guarantee.

Taking label vectors as hypotheses and full-pool winners as classes gives
equivalence-class determination \citep{golovin2010near}. Certificate complexity
is the size of the smallest sufficient label set \citep{buhrman2002complexity,grossman2020instance}. Two-candidate affine comparisons
are linear threshold evaluations, whose partial-input extrema and the
distinction between certificate cost and acquisition cost are established
\citep{deshpande2014approximation}. We quantify label requirements from confidence ranks, including the
$1/4$ prelabel limit. Under the stated model, two-candidate certificates
need asymptotically $n/2$ labels, yet every exact policy reads almost all $n$. For fixed fitted candidates, the covering formulation
bounds the minimum certificate size within $K-1$ labels.

\section{Discussion}
Confidence ranks can impose label requirements.
Across the 108 panel comparisons, a 20\% budget is ruled out in 96, and
minimum certificates average $56$--$57\%$ of the pool. Confidence-score choice
needs certificates of $35.84\%$ on average even with identical predictions.
For ten comparisons of pretrained image classifiers, exact choice reads
$68$--$91\%$ of evaluation labels; an AUGRC tolerance of $5\times10^{-4}$
reduces this to $50$--$67\%$.

The bounds rule out budgets below $\underline C$ before labeling and
benchmark acquisition cost against $C(y)$ afterwards. In the independent-order
model, even optimal acquisition reads almost all labels: a certificate is a
benchmark, not a promised query cost. The budget-to-tolerance map shows
how tolerance can reduce both costs.

\paragraph{Limitations.}
Guarantees require fixed pools, candidates and ranks, with binary labels
or shared multiclass predictions; they do not cover retraining or transfer.
Experiments use $K\le8$. Asymptotic results assume independent uniform orders;
the certificate and acquisition limits also assume iid errors independent of ranks.
No general acquisition bound for fitted data follows.

\section{Implementation and reproducibility}
\label{sec:reproducibility}
\paragraph{Computing environment.}
Model Selector retains all original candidates in its posterior update;
VMA uses the uniform-label adaptation. Experiments used an ARM64 CPU with BLAS threads set to one, Python 3.12 and SciPy
(HiGHS dual simplex for the LPs); package versions are in the artifact.
For $n=5{,}706$, $K=8$, the certificate LP pipeline, including rational
lower bounds and integer witness checks, took 22--26 ms across all six
MAGIC conditions (range of seven-run medians after one warm-up each,
excluding input loading).
The original datasets are Breast Cancer \citep{wolberg1993dataset}, Wine
\citep{aeberhard1992dataset}, and Digits \citep{alpaydin1998dataset}.
Splits are row-level; MAGIC contains simulated events. Selected indicators
can share an original variable, so their count is not a measurement cost.
Training labels are not counted as saved.

The companion source-and-evidence archive contains the frozen protocols,
outputs, independent verification code and reproduction commands. The
LaTeX source accompanying this preprint contains every plotted coordinate;
it can be rebuilt without downloading datasets or checkpoints.

\FloatBarrier
\section*{Acknowledgments}
This work was supported by JSPS KAKENHI Grant Number JP23K28151.

\section*{AI Use Statement}
I used generative AI tools to assist with mathematical analysis, experimental
work, and manuscript preparation. I set the research questions and scope,
checked the proofs, cross-checked the reported numerical results against saved
outputs using independent verification code included in the artifact, and take
responsibility for the final content.

\bibliographystyle{apalike}
\bibliography{refs}

\begin{thebibliography}{}

\bibitem[Aeberhard and Forina, 1992]{aeberhard1992dataset}
Aeberhard, S. and Forina, M. (1992).
\newblock Wine.
\newblock UCI Machine Learning Repository. Dataset.
\newblock doi:10.24432/C5PC7J.

\bibitem[Alpaydin and Kaynak, 1998]{alpaydin1998dataset}
Alpaydin, E. and Kaynak, C. (1998).
\newblock Optical recognition of handwritten digits.
\newblock UCI Machine Learning Repository. Dataset.
\newblock doi:10.24432/C50P49.

\bibitem[Buhrman and de~Wolf, 2002]{buhrman2002complexity}
Buhrman, H. and de~Wolf, R. (2002).
\newblock Complexity measures and decision tree complexity: A survey.
\newblock {\em Theoretical Computer Science}, 288(1):21--43.

\bibitem[Chen, 2025]{chen2025cifar}
Chen, Y. (2025).
\newblock {PyTorch CIFAR Models}.
\newblock \url{https://github.com/chenyaofo/pytorch-cifar-models}.
\newblock Accessed September 16, 2026; fixed checkpoint versions in the
  artifact.

\bibitem[Chow, 1970]{chow1970optimum}
Chow, C.~K. (1970).
\newblock On optimum recognition error and reject tradeoff.
\newblock {\em IEEE Transactions on Information Theory}, 16(1):41--46.

\bibitem[Dash and Liu, 2003]{dash2003consistency}
Dash, M. and Liu, H. (2003).
\newblock Consistency-based search in feature selection.
\newblock {\em Artificial Intelligence}, 151(1--2):155--176.

\bibitem[Dem\v{s}ar, 2006]{demsar2006statistical}
Dem\v{s}ar, J. (2006).
\newblock Statistical comparisons of classifiers over multiple data sets.
\newblock {\em Journal of Machine Learning Research}, 7:1--30.

\bibitem[Deshpande et~al., 2014]{deshpande2014approximation}
Deshpande, A., Hellerstein, L., and Kletenik, D. (2014).
\newblock Approximation algorithms for stochastic boolean function evaluation
  and stochastic submodular set cover.
\newblock In {\em Proceedings of the Twenty-Fifth Annual ACM-SIAM Symposium on
  Discrete Algorithms}, pages 1453--1467.

\bibitem[El-Yaniv and Wiener, 2010]{elyaniv2010foundations}
El-Yaniv, R. and Wiener, Y. (2010).
\newblock On the foundations of noise-free selective classification.
\newblock {\em Journal of Machine Learning Research}, 11(53):1605--1641.

\bibitem[Franc et~al., 2023]{franc2023optimal}
Franc, V., Prusa, D., and Voracek, V. (2023).
\newblock Optimal strategies for reject option classifiers.
\newblock {\em Journal of Machine Learning Research}, 24(11):1--49.

\bibitem[Geifman and El-Yaniv, 2017]{geifman2017selective}
Geifman, Y. and El-Yaniv, R. (2017).
\newblock Selective classification for deep neural networks.
\newblock In {\em Advances in Neural Information Processing Systems},
  volume~30, pages 4878--4887.

\bibitem[Golovin et~al., 2010]{golovin2010near}
Golovin, D., Krause, A., and Ray, D. (2010).
\newblock Near-optimal bayesian active learning with noisy observations.
\newblock In {\em Advances in Neural Information Processing Systems},
  volume~23.

\bibitem[Grossman et~al., 2020]{grossman2020instance}
Grossman, T., Komargodski, I., and Naor, M. (2020).
\newblock Instance complexity and unlabeled certificates in the decision tree
  model.
\newblock In {\em 11th Innovations in Theoretical Computer Science Conference},
  volume 151 of {\em Leibniz International Proceedings in Informatics}, pages
  56:1--56:38.

\bibitem[Hanczar and Dougherty, 2008]{hanczar2008classification}
Hanczar, B. and Dougherty, E.~R. (2008).
\newblock Classification with reject option in gene expression data.
\newblock {\em Bioinformatics}, 24(17):1889--1895.

\bibitem[Hara et~al., 2024]{hara2024active}
Hara, S., Matsuura, M., Honda, J., and Ito, S. (2024).
\newblock Active model selection: A variance minimization approach.
\newblock {\em Machine Learning}, 113:8327--8345.

\bibitem[Hoeffding, 1951]{hoeffding1951combinatorial}
Hoeffding, W. (1951).
\newblock A combinatorial central limit theorem.
\newblock {\em The Annals of Mathematical Statistics}, 22(4):558--566.

\bibitem[Huang et~al., 2017]{huang2017structured}
Huang, R., Ajallooeian, M.~M., Szepesv{\'a}ri, C., and M{\"u}ller, M. (2017).
\newblock Structured best arm identification with fixed confidence.
\newblock In {\em Proceedings of the 28th International Conference on
  Algorithmic Learning Theory}, volume~76 of {\em Proceedings of Machine
  Learning Research}, pages 593--616.

\bibitem[Kay et~al., 2025]{kay2025consensus}
Kay, J., Van~Horn, G., Maji, S., Sheldon, D., and Beery, S. (2025).
\newblock Consensus-driven active model selection.
\newblock In {\em Proceedings of the IEEE/CVF International Conference on
  Computer Vision}.

\bibitem[Kossen et~al., 2021]{kossen2021active}
Kossen, J., Farquhar, S., Gal, Y., and Rainforth, T. (2021).
\newblock Active testing: Sample-efficient model evaluation.
\newblock In {\em Proceedings of the 38th International Conference on Machine
  Learning}, volume 139 of {\em Proceedings of Machine Learning Research},
  pages 5753--5763.

\bibitem[Krizhevsky, 2009]{krizhevsky2009learning}
Krizhevsky, A. (2009).
\newblock Learning multiple layers of features from tiny images.
\newblock Technical report, University of Toronto.

\bibitem[Okanovic et~al., 2025]{okanovic2025selector}
Okanovic, P., Kirsch, A., Kasper, J., Hoefler, T., Krause, A., and G{\"u}rel,
  N.~M. (2025).
\newblock All models are wrong, some are useful: Model selection with limited
  labels.
\newblock In {\em Proceedings of the 28th International Conference on
  Artificial Intelligence and Statistics}, volume 258 of {\em Proceedings of
  Machine Learning Research}, pages 2035--2043.

\bibitem[Pedregosa et~al., 2011]{pedregosa2011scikit}
Pedregosa, F., Varoquaux, G., Gramfort, A., Michel, V., Thirion, B., Grisel,
  O., Blondel, M., Prettenhofer, P., Weiss, R., Dubourg, V., Vanderplas, J.,
  Passos, A., Cournapeau, D., Brucher, M., Perrot, M., and Duchesnay, {\'E}.
  (2011).
\newblock Scikit-learn: Machine learning in python.
\newblock {\em Journal of Machine Learning Research}, 12:2825--2830.

\bibitem[Sawade et~al., 2012]{sawade2012active}
Sawade, C., Landwehr, N., and Scheffer, T. (2012).
\newblock Active comparison of prediction models.
\newblock In {\em Advances in Neural Information Processing Systems},
  volume~25, pages 1754--1762.

\bibitem[Shin et~al., 2017]{shin2017scwc}
Shin, K., Kuboyama, T., Hashimoto, T., and Shepard, D. (2017).
\newblock {sCwc/sLcc}: Highly scalable feature selection algorithms.
\newblock {\em Information}, 8(4):159.

\bibitem[Traub et~al., 2024]{traub2024overcoming}
Traub, J., Bungert, T.~J., L{\"u}th, C.~T., Baumgartner, M., Maier-Hein, K.~H.,
  Maier-Hein, L., and J{\"a}ger, P.~F. (2024).
\newblock Overcoming common flaws in the evaluation of selective classification
  systems.
\newblock In {\em Advances in Neural Information Processing Systems},
  volume~37, pages 2323--2347.

\bibitem[Wolberg et~al., 1993]{wolberg1993dataset}
Wolberg, W., Mangasarian, O., Street, N., and Street, W. (1993).
\newblock Breast cancer wisconsin (diagnostic).
\newblock UCI Machine Learning Repository. Dataset.
\newblock doi:10.24432/C5DW2B.

\end{thebibliography}
\end{document}